\documentclass[letterpaper, 10pt, conference]{ieeeconf}

\IEEEoverridecommandlockouts   
\usepackage[utf8]{inputenc}
\usepackage[T1]{fontenc}

\usepackage{amsmath,amssymb}
\usepackage{graphicx}
\usepackage{booktabs}
\usepackage{array}
\usepackage{multirow}
\usepackage{caption}
\usepackage{subcaption}
\usepackage{xcolor}
\usepackage{cite}
\usepackage{bm}
\usepackage{amsthm}
\theoremstyle{plain}
\newtheorem{theorem}{Theorem}
\newtheorem{lemma}{Lemma}
\newtheorem{proposition}{Proposition}
\newtheorem{corollary}{Corollary}
\theoremstyle{definition}
\newtheorem{definition}{Definition}
\newtheorem{assumption}{Assumption}
\newtheorem{problem}{Problem}
\theoremstyle{remark}
\newtheorem{remark}{Remark}
\makeatletter
\let\NAT@parse\undefined
\makeatother
\usepackage[colorlinks=true,allcolors=blue]{hyperref}
\usepackage{algorithm}
\usepackage{algpseudocode}

\newcommand{\R}{\mathcal{R}}                 
\newcommand{\Traj}{\mathcal{T}}              
\newcommand{\Ifam}{\mathcal{I}}              
\newcommand{\Iind}{\mathcal{I}_{\mathrm{ind}}} 
\newcommand{\Pcfg}{P}                        
\newcommand{\kf}{\kappa_f}                   
\newcommand{\nuf}{\nu_f(\Ifam)}              
\newcommand{\rc}{r_{c}}                      
\newcommand{\rs}{r_{s}}                      
\newcommand{\rmove}{r_{\mathrm{move}}}       
\newcommand{\Ares}{A_{\mathrm{RES}}}         
\newcommand{\Aone}{A_{1}}                     
\newcommand{\Atwo}{A_{2}}                     
\newcommand{\fopt}{f^{\star}}                
\newcommand{\foptI}{f^{\star}_{\Ifam}}       
\newcommand{\AoptI}{A^{\star}_{\Ifam,2}}     
\newcommand{\Bstar}{B^{\star}}               
\newcommand{\gunc}{g^{\mathrm{unc}}}         
\newcommand{\gcon}{g^{\mathrm{con}}}         
\newcommand{\hNb}{h(N,\beta)}                
\newcommand{\marg}[2]{f(#1 \mid #2)}         
\newcommand{\setdiff}{\setminus}
\newcommand{\eqdef}{:=}

\title{\LARGE \bf
Failure-Resilient Multi-Robot Coordination with $k$-Connectivity Constrained Submodular Maximization
}

\author{Yoonjae Hwang$^{1}$, Hojune Kim$^{1}$,
Ellen Novoseller$^{2}$, Guangyao Shi$^{1}$,and Gaurav S. Sukhatme$^{1}$
\thanks{$^{1}$Y.~Hwang, G.~Shi, H.~Kim, and G.~S. Sukhatme are
         with the School of Computer Science, University of Southern California,
         Los Angeles, CA, USA
         {\tt\small \{yoonjae, shig, hojuneki, gaurav\}@usc.edu}}%
\thanks{$^{2}$E.~Novoseller is with the U.S. Army DEVCOM Army Research Laboratory,
         {\tt\small ellen.r.novoseller.civ@army.mil}}%
}

\begin{document}

\maketitle
\thispagestyle{empty}
\pagestyle{empty}

\begin{abstract}
Multi-robot coordination is essential in applications such as coverage, target tracking, and information gathering, where robots must jointly select actions to maximize team performance. Submodular maximization provides an effective framework for such coordination, and resilient variants further account for robot failures or adversarial removals. However, existing methods typically overlook the impact of robot loss on the communication network, which may disconnect the surviving team and prevent continued coordination. To address this issue, we introduce Resilient \(k\)-Connected Submodular Maximization, which jointly considers task performance, resilience, and communication connectivity. The resulting feasible family is not a matroid, preventing a direct application of classical curvature-greedy guarantees. We overcome this challenge through a Menger-based connectivity reduction, a connectivity-aware greedy algorithm with a feasibility-preserving fallback, and a new curvature-based analysis that captures the cost of enforcing connectivity through a single factor \(\rho\). Our method achieves a worst-case post-failure objective within a factor
$
[\max(1-\nu_f(\mathcal{I}),h(N,\beta))]^2 \rho \frac{1}{\kappa_f}(1-e^{-\kappa_f})
$
of the optimal connectivity-respecting resilient assignment, while recovering the classical unconstrained guarantee in the appropriate limits. Experiments on multi-robot target tracking validate the proposed method and characterize the practical impact of connectivity constraints.

\end{abstract}

\section{Introduction}
\label{sec:intro}


Multi-robot systems provide scalable solutions to applications such as environmental monitoring, target tracking, search and rescue, and collaborative exploration~\cite{marques2015critical, robin2016, queralta2020, burgard2005}. In many of these settings, the robots must coordinate their sensing or motion decisions to maximize a common team objective~\cite{krausegolovin2014}. Submodular set functions provide a natural mathematical framework for such coordination by capturing the principle of diminishing returns: the marginal benefit of adding an additional robot decreases as more robots are already selected. Many commonly used objectives, including mutual information~\cite{krause2008robust}, area coverage~\cite{shi2021}, number of observed targets~\cite{zhou2019}, and sensor-measurement entropy~\cite{dames2017} are monotone submodular~\cite{nemhauser1978}. This structure enables computationally efficient greedy algorithms with well-established constant-factor performance guarantees, as reviewed in Section~\ref{sec:rel}.

\begin{figure}[t]
  \centering
  \includegraphics[width=\columnwidth]{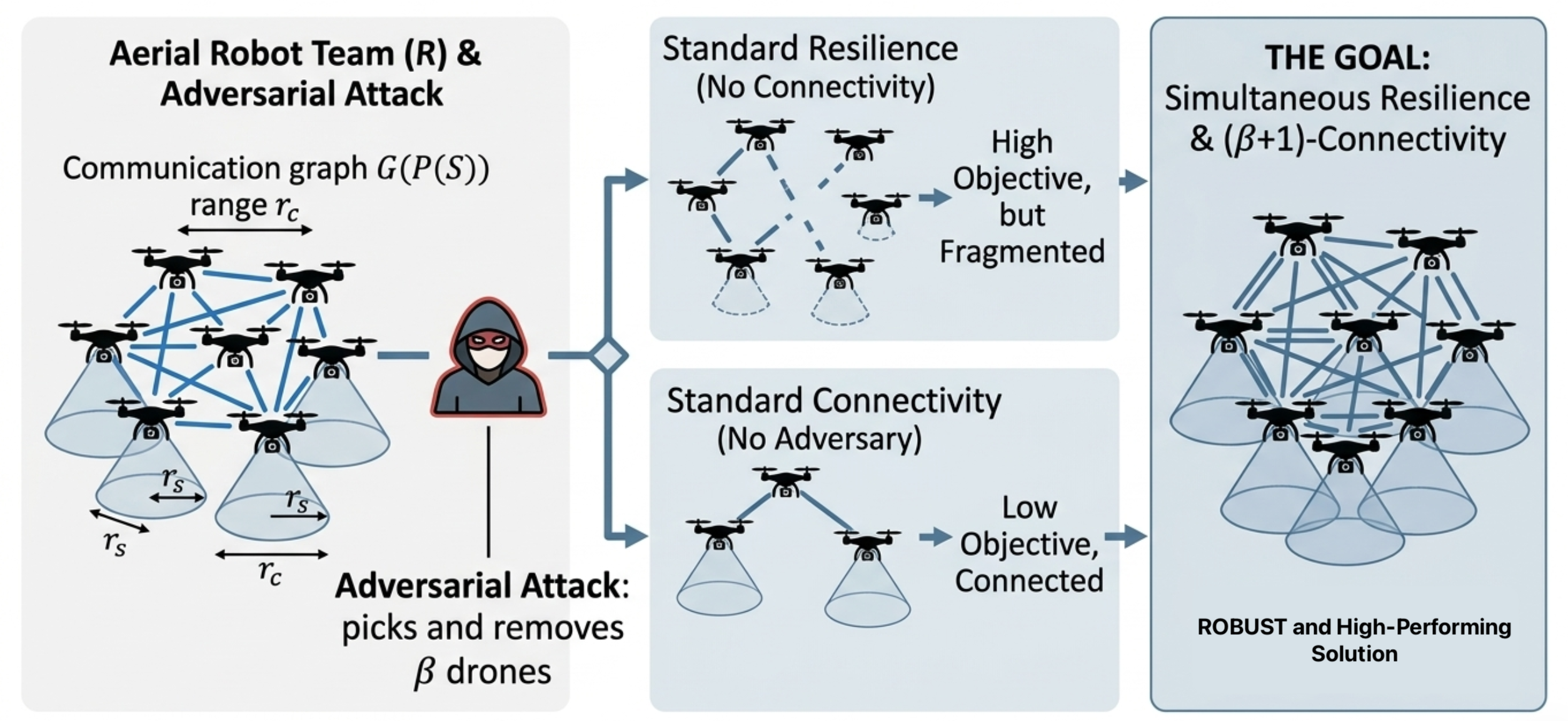}
  \caption{The core structural conflict in multi-robot deployments. \textbf{(Left)} Aerial teams face adversarial attacks or failures removing up to $\beta$ nodes. \textbf{(Middle)} Existing frameworks treat requirements in isolation, in which standard resilient maximization risks post-attack fragmentation, while standard connectivity assumes a failure-free environment. \textbf{(Right)} Our framework resolves this by achieving simultaneous adversarial resilience and $(\beta+1)$-vertex-connectivity, ensuring the surviving team remains structurally connected and performant.}
  \label{fig:goal}
\end{figure}

Classical submodular maximization, however, typically assumes that the selected robots remain available throughout execution. In practice, robots may be lost due to hardware failures, communication interference, denial-of-service attacks, or other adverse events. This has motivated resilient submodular maximization, where the team is selected so that the objective remains large even after a bounded number of robots are removed~\cite{zhou2019, tzoumas2017, orlin2018}. Existing resilient formulations primarily focus on preserving 
submodular objective performance
after such removals. Yet, critically, robot loss can 
disrupt the communication network among surviving robots. A team might therefore retain high sensing or coverage capacity after failures while becoming disconnected and rendered unable to continue coordinated operation.

Communication connectivity and resilience have both been studied extensively, but largely as separate requirements. In this work, we consider them jointly and ask how to select robot actions so that the team achieves a high submodular objective while remaining connected even after adversarial robot removals. We refer to this problem as \emph{Resilient \(k\)-Connected Submodular Maximization} (see Fig.~\ref{fig:goal}). The joint treatment is substantially more challenging than simply combining two existing constraints. Resilience requires accounting for the worst-case subset of removed robots, while connectivity is a global property whose feasibility depends on which robots survive.
This interaction also breaks key structures used in existing approximation analysis. In the resilient maximization framework of Tzoumas et al.~\cite{tzoumas2017}, robustness is achieved by constructing a sacrificial ``bait'' set of \(\beta\) high-value robots together with a greedily selected remainder. Under a connectivity requirement, however, the adversary determines which robots are removed, and the worst-case removal need not coincide with the bait set. Consequently, enforcing connectivity only on the non-bait robots does not guarantee that the surviving team remains connected. Moreover, the constraint of selecting one trajectory per robot together with a global vertex-connectivity requirement does not form a matroid. As a result, classical curvature-based greedy guarantees cannot be applied directly and must be re-established for the connectivity-constrained resilient setting.

We make the following contributions:
\begin{itemize}
\item We formulate resilient submodular maximization under a $(\beta{+}1)$-vertex-connectivity constraint and show that the family of partial, connectivity-feasible selections is not a matroid (Proposition~\ref{prop:nonmatroid}), identifying precisely why the classical greedy analysis fails to transfer.
\item We introduce a connectivity filter with a rigid-shift fallback across both bait and greedy stages, show it guarantees a feasible, $(\beta{+}1)$-connected output that survives any removal of size at most $\beta$ regardless of composition (Proposition~\ref{prop:connectivity}), and re-derive the curvature-greedy bound against a connectivity-respecting benchmark (Lemma~\ref{lem:greedy}), isolating the filter's entire cost into a single interpretable constant $\rho\in[0,1]$.
\item We prove a resilient, connectivity-preserving approximation guarantee (Theorem~\ref{thm:main}) and empirically corroborate the algorithm's feasibility, the price of connectivity, and the squared resilience factor (Section~\ref{sec:experiments}).
\end{itemize}


\section{Related Work}
\label{sec:rel}

Our problem sits at the intersection of three literatures, summarized in Table~\ref{tab:related}.
 
\textbf{Submodular maximization and curvature.} The classical greedy $(1-1/e)$ guarantee~\cite{nemhauser1978} was sharpened via curvature~\cite{conforti1984, iyer2013} and extended to matroid and knapsack constraints~\cite{calinescu2011, chekuri2011}. None impose a structural $k$-connectivity requirement on the selected elements.
 
\textbf{Connected submodular maximization.} Connected maximum coverage was introduced by Vandin et al.~\cite{vandin2011} and improved by Hochbaum and Rao~\cite{hochbaum2020}, with follow-up work giving a $1/2$-approximation in the geometric setting~\cite{huang2019}, an $O(\sqrt{1/k})$ bound~\cite{kuo2014}, and a two-stage heuristic for $1$-connectivity~\cite{shi2021}. None provide a provable guarantee for $k \geq 2$, with or without an adversary.
 
\textbf{Resilient submodular maximization.} Maximizing $f(A \setminus B)$ against an adversary removing $\beta$ elements was introduced and shown NP-hard by Orlin et al.~\cite{orlin2018} ($\approx 29\%$ guarantee, $\beta \leq \sqrt{2\alpha}$). Tzoumas et al.~\cite{tzoumas2017} remove this restriction for any $\beta \leq \alpha$, and Zhou et al.~\cite{zhou2019} extend it to the matroid-constrained multi-robot setting~\cite{krause2008robust}. None impose a structural requirement on the surviving graph.
 
No prior work imposes $k$-connectivity as a constraint that holds simultaneously with adversarial resilience, and this is the gap we address.

\begin{table*}[t]
\centering
\caption{Positioning relative to prior work. $k$: connectivity level, $\beta$: resilience budget, Matroid: whether the feasible family is a matroid}
\label{tab:related}
\small
\begin{tabular}{@{}lcccl@{}}
\toprule
Work & $k$ & $\beta$ & Matroid & Guarantee \\
\midrule
Nemhauser \cite{nemhauser1978}   & --      & --      & yes & $1-1/e$ \\
Conforti \cite{conforti1984}     & --      & --      & yes & $\tfrac{1}{\kf}(1-e^{-\kf})$ \\
Hochbaum--Rao \cite{hochbaum2020}& $1$     & --      & no  & radius-based \\
Kuo \cite{kuo2014}               & $\geq1$ & --      & no  & $O(\sqrt{1/k})$ \\
Shi \cite{shi2021}               & $1$     & --      & no  & heuristic \\
Orlin \cite{orlin2018}           & --      & $\leq\sqrt{2\alpha}$ & yes & $\approx0.29$ \\
Tzoumas \cite{tzoumas2017}       & --      & $\leq\alpha$ & yes & $\max(1{-}\kf,\tfrac{1}{\beta+1})\tfrac{1}{\kf}(1{-}e^{-\kf})$ \\
Zhou \cite{zhou2019}             & --      & $\leq\alpha$ & yes & $[\cdot]^2$ (matroid) \\
\textbf{This work}               & $\beta{+}1$ & $\leq\alpha$ & \textbf{no}  & $\left[\max\left\{1-\nu_f(\mathcal{I}),\,h(N,\beta)\right\}\right]^2
  \rho\,\frac{1}{\kappa_f}\left(1-e^{-\kappa_f}\right)$
  (Thm.~\ref{thm:main}) \\
\bottomrule
\end{tabular}
\end{table*}
\section{Problem Formulation}
\label{sec:formulation}
\subsection{Trajectory Model and Submodular Objective}
Let $\R=\{r_1,\dots,r_N\}$ be a team of $N$ robots. Each robot $r$ has a finite set of candidate trajectories, which we tag with the robot's identity, $\Traj_r=\{(r,\tau):\tau\in\mathcal{T}_r\}$, with $|\Traj_r|=D$. Because the robot identity $r$ is part of each element, the sets $\Traj_r$ are pairwise disjoint by construction, and we write $\Traj_\R=\bigcup_{r\in\R}\Traj_r$ for their (disjoint) union. A selection assigns each robot exactly one trajectory. We identify it with the set $S\subseteq\Traj_\R$ of chosen (robot, trajectory) pairs.

The team objective is a normalized, monotone nondecreasing, submodular set function $f:2^{\Traj_\R}\rightarrow\mathbb{R}_{\geq0}$, with $f(\emptyset)=0$.  Thus, adding a trajectory never decreases the objective, and its marginal benefit exhibits diminishing returns.  We write $\marg{s}{X}\eqdef f(X\cup\{s\})-f(X)$ for the marginal gain of trajectory $s$ relative to $X$.  One example of such an objective is the number of distinct targets covered by the selected trajectories, as discussed 
in Section~\ref{sec:experiments}.

\subsection{Communication and Failure Model}

Let $\Pcfg(s)$ denote the endpoint of trajectory $s$, and for a selection $S$ let $\Pcfg(S)=\{\Pcfg(s):s\in S\}$ be the joint configuration. Two robots communicate when their endpoints lie within range $\rc$. We define the proximity graph $G(\Pcfg(S))=(\R,E)$, where $(i,j)\in E \iff \|\Pcfg(s_i)-\Pcfg(s_j)\|\leq\rc$. Recall that the \emph{vertex connectivity} $\kappa(G)$ of a graph $G$ is the size of the smallest set of vertices whose removal disconnects $G$. By Menger's theorem~\cite{menger1927}, $\kappa(G)$ is the largest $k$ such that every pair of vertices is joined by $k$ internally vertex-disjoint paths. In our setting, each vertex is a robot, so $\kappa(G)$ is exactly the minimum number of robot failures that can disconnect the network.

We assume that in any deployment, at most $\beta$ robots may fail or be attacked~\cite{tzoumas2017, zhou2019}.  As in prior work, the identities of the failed robots are unknown at the time of trajectory selection, 
and a failure removes the affected robot's trajectory contribution.  It also removes the corresponding vertex of our communication graph $G$. 
Because every robot acts, the number of selected trajectories $\alpha := |S| = N$, and the number of greedily selected ``resilient'' robots is at least $\alpha-\beta=N-\beta$.

\subsection{Assumptions}
\begin{assumption}[Initial connectivity]
\label{ass:init}
At the start of each planning cycle, the current configuration induces a $(\beta{+}1)$-vertex-connected graph representing a multi-robot team with a prescribed $k$-vertex-connected topology as in Shi et al.~\cite{shi2025fast}.
\end{assumption}
\begin{assumption}[Endpoint connectivity]
\label{ass:endpoint}
Connectivity is required only of the committed endpoint configuration $\Pcfg(S)$. In other words, robots may temporarily disconnect mid-trajectory.
\end{assumption}
\begin{assumption}[Non-degeneracy]
\label{ass:nondegen}
Every singleton trajectory satisfies $f(s)\neq0$, so that $\nu_f(\Ifam)$ and $\kf$ are well defined and the resilient optimum is nonzero, i.e., $\foptI>0$. Section~\ref{sec:experiments} shows the excluded degenerate instances (post-attack objective collapsing to zero) are a negligible fraction of scenarios.
\end{assumption}

\subsection{Formal Problem Statement}
We synthesize the individual components of robot assignment, communication connectivity, and adversarial resilience into a single robust optimization problem. 

\begin{problem}[Resilient Connectivity-Constrained Maximization]
\label{prob:main}
Given a team of robots $\R$, candidate trajectories $\Traj_r$ for each $r \in \R$, communication radius $\rc$, and an attack budget $\beta$, find a joint trajectory selection $S^\star \subseteq \Traj_\R$ that solves:
\begin{align}
    \max_{S \subseteq \Traj_\R} \quad & \min_{\substack{B \subseteq S \\ |B| \leq \beta}} f(S \setminus B) \label{eq:prob_objective} \\
    \text{s.t.} \quad & |S \cap \Traj_r| = 1, \quad \forall r \in \R \label{eq:prob_partition} \\
    & \kappa\big(G(\Pcfg(S))\big) \geq \beta + 1 \label{eq:prob_connectivity}
\end{align}
\end{problem}
We assign each robot one trajectory so that the pre-attack graph is $(\beta{+}1)$-vertex-connected, ensuring the surviving graph stays connected after at most $\beta$ removals. The goals pull in opposite directions: coverage rewards spreading robots out to reduce overlap, while resilient connectivity rewards keeping them close with redundant links. This tension, against a worst-case adversary, breaks the matroid structure the classical greedy analysis relies on, as we show next.

\subsection{Resilient Objective and Theoretical Baseline}
For a selection $S$, a worst-case removal is $\Bstar(S)\in\arg\min_{B\subseteq S,|B|\leq\beta}f(S\setdiff B)$. We compare algorithm performance against the best connectivity-respecting resilient assignment,
\begin{equation}
  \foptI=\max_{S\in\Ifam}\min_{B\subseteq S,|B|\leq\beta}
  f(S\setdiff B),
  \label{eq:benchmark}
\end{equation}
while the unconstrained optimum is $\fopt=\max_{|S\cap\Traj_r|=1}\min_{|B|\leq\beta}f(S\setdiff B)$. Since \eqref{eq:benchmark} ranges over a subset of the selections feasible for $\fopt$, we have $\foptI\leq\fopt$. Comparing against the constrained optimal solution $\foptI$ is theoretically necessary, as standard greedy approximation proofs require the elements of the optimal set to be feasible additions to the algorithm's intermediate solutions.

\subsection{The Feasible Family and the Failure of Matroid Structure}
\label{sec:nonmatroid}
The central object is the intersection of the partition matroid (one trajectory per robot) with the connectivity constraint:
\begin{equation}
\begin{aligned}
  \Ifam = \Big\{\, S \subseteq \Traj_\R :\;& |S \cap \Traj_r| = 1 \quad \forall r \in \R, \\
  & \text{and } \kappa\big(G(\Pcfg(S))\big) \geq \beta + 1 \Big\}.
\end{aligned}
\label{eq:feasible-family}
\end{equation}
The one-trajectory-per-robot condition is a partition matroid. Intersecting it with a $k$-connectivity requirement, however, destroys the matroid structure. To make this precise, extend $\Ifam$ to a family of \emph{independent sets} $\Iind$: $S\in\Iind$ iff $S$ selects at most one trajectory per robot and the induced endpoint graph $G(\Pcfg(S))$ is $(\beta{+}1)$-vertex-connected. This is the object relevant to Algorithm~\ref{alg:rescon}'s correctness, since the stages of the proposed algorithm (Algorithm 1, introduced in Sec.~\ref{sec:method}) 
build the final trajectory selection $\Ares$ one trajectory at a time, and it is exactly the family of \emph{partial} selections encountered along this process that the classical curvature-greedy guarantee requires to form a matroid. We show it does not.
 
\begin{proposition}[$\Iind$ is not a matroid]
\label{prop:nonmatroid}
There exist a communication radius $\rc$ and a robot set such that $\Iind$ violates the hereditary property, and is therefore not the independent-set family of any matroid. Algorithm~1 constructs solutions through partial selections in $\mathcal{I}_{\mathrm{ind}}$. Its failure to form a matroid is therefore what prevents direct use of the classical curvature-greedy analysis.
\end{proposition}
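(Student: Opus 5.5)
The plan is to give an explicit small counterexample to the hereditary property: a set $S\in\Iind$ and a subset $S'\subset S$ with $S'\notin\Iind$. The key observation is that vertex connectivity is not monotone under vertex deletion. Removing a trajectory from a partial selection removes its robot from the induced endpoint graph, and this can lower $\kappa$ below $\beta+1$. I will read $G(\Pcfg(S))$ for a partial selection $S$ as the graph on the robots that $S$ actually assigns. I will also use the standard convention that $\kappa(K_n)=n-1$.

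I would take $\beta=1$, so the requirement is $2$-vertex-connectivity, with four robots $r_1,\dots,r_4$ and one candidate trajectory each ($D=1$ suffices, or pad with unused trajectories). Set $\rc=1$ and choose the endpoints of $r_1,r_2,r_3,r_4$ at $(0,0)$, $(1,0)$, $(1,1)$, $(0,1)$. These are the corners of a unit square, and the diagonals have length $\sqrt2>1$. The proximity graph of the full selection $S=\{s_1,s_2,s_3,s_4\}$ is therefore the $4$-cycle. Since $\kappa(C_4)=2\ge\beta+1$, we have $S\in\Iind$.

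Now delete $s_4$ to get $S'=\{s_1,s_2,s_3\}$. Its induced graph is the path $r_1\!-\!r_2\!-\!r_3$. Removing the single vertex $r_2$ disconnects it, so $\kappa=1<2$ and $S'\notin\Iind$. Thus $S'\subset S\in\Iind$ but $S'\notin\Iind$, which violates heredity. Since every matroid's independent-set family is closed under subsets, $\Iind$ is not the independent-set family of any matroid. (In the same way, $\{s_1\}$ and $\{s_1,s_2\}$ fail the requirement trivially, because a graph on at most $\beta+1$ vertices cannot be $(\beta+1)$-connected.) This also shows that the chain of partial selections built one element at a time must pass through infeasible or non-hereditary states.

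The last sentence of the proposition is interpretive rather than a mathematical claim. I would justify it by pointing out that the curvature-greedy bound of Conforti--Cornuéjols, as used by Tzoumas et al., relies on an exchange argument. That argument assumes each element of the optimal set remains a feasible addition to, or removal from, the greedy's intermediate sets, and heredity is exactly the property that guarantees this. The only delicate step is fixing the convention for $\kappa$ on small or complete graphs so that $S$ is unambiguously feasible. The $4$-cycle avoids complete graphs and so sidesteps this, and I will state the convention explicitly.
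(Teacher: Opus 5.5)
Your proof is correct and is essentially the paper's own argument: the same $\beta=1$, $\rc=1$ unit-square $4$-cycle, which lies in $\Iind$, and whose one-vertex deletion leaves a path on three vertices with $\kappa=1<2$, violating heredity. Your explicit coordinates (side length $1$ gives an edge, diagonal $\sqrt{2}>1$ does not) and your stated convention that $G(\Pcfg(S))$ is the graph on the robots $S$ actually assigns make precise what the paper leaves implicit.
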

 
\begin{proof}
Take $\beta=1$, so that feasibility requires $2$-vertex connectivity, and set $\rc=1$. Consider four robots $\{r_1,r_2,r_3,r_4\}$ and let $X=\{(r_1,\tau_1),(r_2,\tau_2),(r_3,\tau_3),(r_4,\tau_4)\}$ be the selection whose endpoints $\Pcfg(r_i,\tau_i)$ form a $2$-connected unit square, so $X\in\Iind$. Let $Y=X\setdiff\{(r_4,\tau_4)\}=\{(r_1,\tau_1),(r_2,\tau_2),(r_3,\tau_3)\}$. Removing one vertex from a $4$-cycle leaves a path on three vertices. Thus, $Y\notin\Iind$, violating the axiom that every subset of an independent set must itself be independent, so $\Iind$ fails to be a matroid.
\end{proof}

\section{Methodology and Theoretical Analysis}
\label{sec:method}

Our algorithm mirrors the bait-plus-greedy structure of Tzoumas et al.~\cite{tzoumas2017} and Zhou et al.~\cite{zhou2019}, augmented with a connectivity filter and a rigid-shift fallback in \emph{both} stages. The analysis is in two parts. First, connectivity and survivability (Section~\ref{sec:survivability}) follow essentially by construction and are dealt with in Proposition~\ref{prop:connectivity}. Second, the substance of the analysis is the performance bound (Sections~\ref{sec:greedy}--\ref{sec:main}), which quantifies the effect of the connectivity constraint on the usual resilient greedy choices, and how this constraint impacts the approximation guarantee.

\subsection{The Algorithm}
Algorithm~\ref{alg:rescon} (RES-CON) states the procedure precisely.
 
\begin{algorithm}[t]
\caption{RES-CON}
\label{alg:rescon}
\begin{algorithmic}[1]
\Require Robots $\R$, trajectory sets $\{T_r\}_{r\in\R}$, budget $\beta$,
         initial $(\beta{+}1)$-connected configuration $P_0$ (Assumption~\ref{ass:init})
\Ensure Feasible, $(\beta{+}1)$-connected selection $\Ares\in\Ifam$
\Statex \textit{Stage S1 --- bait}
\State $\Aone\gets\emptyset$; \ $P\gets P_0$ \Comment{shadow configuration}
\State Sort $\R$ by non-increasing single-trajectory coverage
\While{$|\Aone|<\beta$}
    \State $r\gets$ next robot in sorted order
    \State $s\gets$ highest-coverage trajectory of $r$
    \Statex \hspace{2.3em} s.t.\ $G\big(P[r{:}=\Pcfg(s)]\big)$ is $(\beta{+}1)$-connected
    \If{no such $s$ exists}
        \State $s\gets \tau_0(r)$ \Comment{stay at $P_0(r)$; Prop.~\ref{prop:connectivity}}
    \EndIf
    \State $\Aone\gets\Aone\cup\{s\}$; \ $P[r]\gets\Pcfg(s)$ \Comment{update shadow}
\EndWhile
\Statex \textit{Stage S2 --- resilient greedy}
\State $\Atwo\gets\emptyset$
\For{$k=1$ \textbf{to} $\alpha-\beta$}
    \State $s^{\star}\gets\displaystyle\arg\max_{s}\ \marg{s}{\Aone\cup\Atwo}$
    \Statex \hspace{2.3em} subject to $G\big(P[r_s{:}=\Pcfg(s)]\big)$ $(\beta{+}1)$-connected
    \If{no feasible $s^{\star}$ exists}
        \State $s^{\star}\gets \tau_0(r_s)$ \Comment{stay at $P_0(r_s)$}
    \EndIf
    \State $\Atwo\gets\Atwo\cup\{s^{\star}\}$; \ $P[r_{s^\star}]\gets\Pcfg(s^\star)$
\EndFor
\State \Return $\Ares\gets\Aone\cup\Atwo$
\end{algorithmic}
\end{algorithm}
 
\paragraph{Stage S1 (bait $\Aone$)} The robots are sorted in non-increasing order of their maximum single-trajectory coverage. The top $\beta$ robots are then processed, each assigned its highest-coverage trajectory subject to keeping the running shadow configuration $(\beta{+}1)$-connected (line 5), with the stay-in-place fallback (i.e., returning the robot to its Assumption~\ref{ass:init} initial endpoint $P_0(r)$) invoked when no such trajectory exists (line 7; see Proposition~\ref{prop:connectivity}). Ordering by standalone rather than marginal value may assign overlapping bait trajectories, but this affects neither feasibility nor Theorem~\ref{thm:main} as the redundancy is priced into the guarantee through $\nuf\to1$. Because the fallback leaves the shadow configuration unchanged at that robot's coordinate, it trivially preserves whatever connectivity level the configuration held prior to the attempted commit.

\paragraph{Stage S2 (resilient set $\Atwo$)} Each of the remaining $\alpha-\beta$ robots greedily selects the connectivity-feasible trajectory of maximal marginal gain (line 13), executing the rigid fallback 
when no feasible candidate exists (line 15).
 
\paragraph{Output} $\Ares=\Aone\cup\Atwo$ (line 19).

\paragraph{Complexity} Each commit evaluates at most $D$ candidate trajectories, and for each it tests whether $\kappa(G)\geq\beta{+}1$ on an $N$-vertex graph. This test runs in $O\big((\beta{+}1)\,|E|\,\sqrt{N}\big)$ time, with $|E|=O(N^2)$ in the worst case. Over the $N$ commits, RES-CON therefore runs in low-order polynomial time, $O\big(N\,D\,(\beta{+}1)\,|E|\,\sqrt{N}\big)$, matching the connectivity-blind resilient greedy up to the per-step connectivity factor.
 
\subsection{Connectivity and Survivability}
\label{sec:survivability}
Every commitment in Algorithm~\ref{alg:rescon} preserves $(\beta{+}1)$-connectivity (or falls back to a move that trivially does), so output connectivity and survivability under attack follow directly from the construction.
 
\begin{proposition}[Connectivity and survivability]
\label{prop:connectivity}
Algorithm~\ref{alg:rescon} always outputs some $\Ares\in\Ifam$. Moreover, for every removal $B\subseteq\R$ with $|B|\leq\beta$ (whether $B$ consists of bait robots, greedy robots, or any mixture of the two), the induced subgraph 
$G(\Pcfg(\Ares)) \setminus B$
is connected.
\end{proposition}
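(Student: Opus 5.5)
The plan is an induction over the commits of Algorithm~\ref{alg:rescon}, using the shadow configuration $P$ as the invariant carrier, followed by a direct application of the definition of vertex connectivity for the survivability claim. Concretely, I would prove the invariant: \emph{at every point in the execution, $G(P)$ is $(\beta{+}1)$-vertex-connected}, where $P$ assigns to each robot either its committed endpoint or, if not yet processed, its initial endpoint $P_0(r)$. The base case is Assumption~\ref{ass:init}: before any commit $P=P_0$, and $G(P_0)$ is $(\beta{+}1)$-connected.

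For the inductive step, consider a single commit (line 5/7 in S1 or line 13/15 in S2) that changes only coordinate $r$ of $P$. There are two cases. If a feasible trajectory $s$ is found, the filter explicitly checks that $G(P[r{:}=\Pcfg(s)])$ is $(\beta{+}1)$-connected, and the update sets $P$ to exactly this configuration, so the invariant holds by construction. If no feasible trajectory exists, the fallback selects $\tau_0(r)$ with $\Pcfg(\tau_0(r))=P_0(r)$. Since robot $r$ has not yet been processed (each robot is processed exactly once), its current shadow coordinate is already $P_0(r)$. The update therefore leaves $P$ unchanged, and the invariant is inherited.

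The point I expect to need the most care is verifying that these two preconditions actually hold. First, I need that $\tau_0(r)\in\Traj_r$, i.e., staying in place is a legitimate candidate trajectory; I would state this explicitly as the standing convention behind line 7. Second, I need that each robot is committed exactly once across S1 and S2. S1 processes $\beta$ distinct robots, and S2 ranges over the remaining $\alpha-\beta=N-\beta$ robots, with the argmax restricted to unassigned robots. Given this, the fallback never overwrites an already moved coordinate.

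Once the loop ends, every robot has exactly one trajectory, so $|\Ares\cap\Traj_r|=1$ for all $r$. Moreover $P=\Pcfg(\Ares)$, so the invariant gives $\kappa(G(\Pcfg(\Ares)))\ge\beta+1$, and hence $\Ares\in\Ifam$ by \eqref{eq:feasible-family}. For survivability, take any $B\subseteq\R$ with $|B|\le\beta$. By definition of $\kappa$, no vertex set of size at most $\beta<\kappa$ disconnects the graph, so $G(\Pcfg(\Ares))\setminus B$ is connected. Here I also use that the graph has at least $\beta+2$ vertices, as is implicit in $(\beta{+}1)$-connectivity. The argument never references which stage produced the robots in $B$, so it covers bait robots, greedy robots, or any mixture, as the statement claims.
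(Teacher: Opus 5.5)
Your proof is correct and follows the paper's argument essentially step for step: you induct over commits with the shadow configuration as the invariant carrier (base case from Assumption~\ref{ass:init}, the filter case by construction, the fallback case leaving $P$ unchanged), then apply the definition of $\kappa$ to get survivability. Your explicit checks that $\tau_0(r)\in\Traj_r$, that each robot is committed exactly once, and that $|\Ares\cap\Traj_r|=1$ make precise preconditions the paper's proof only uses implicitly.
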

\begin{proof}
\emph{Feasibility.} Let $P^{(0)}=P_0$ denote the shadow configuration prior to any commits, and let $P^{(i)}$ denote the shadow configuration after the $i$-th commit (previously committed robots at their chosen endpoints, not-yet-committed robots still at their $P_0$ position). We show by induction on $i$ that $P^{(i)}$ is $(\beta{+}1)$-connected for every $i$. The base case $P^{(0)}=P_0$ holds by Assumption~\ref{ass:init}. For the inductive step, suppose $P^{(i-1)}$ is $(\beta{+}1)$-connected and consider the $i$-th commit, for robot $r$. If a trajectory $s$ exists such that $G\big(P^{(i-1)}[r{:=}\Pcfg(s)]\big)$ is $(\beta{+}1)$-connected, the algorithm selects it directly, and $P^{(i)}=P^{(i-1)}[r{:=}\Pcfg(s)]$ is $(\beta{+}1)$-connected by construction. Otherwise, the algorithm falls back to $\tau_0(r)$, i.e., $r$ remains at $P_0(r)$, so $P^{(i)}=P^{(i-1)}$ is unchanged and remains $(\beta{+}1)$-connected by the inductive hypothesis. Every commit in both Stage S1 and Stage S2 falls into one of these two cases, so by induction over the full commit sequence the final shadow configuration $P^{(N)}=\Pcfg(\Ares)$ is $(\beta{+}1)$-connected, so $\Ares\in\Ifam$.
 
\emph{Survivability.} A graph with $\kappa(G)\geq k$ remains connected after deleting any set of fewer than $k$ vertices, equivalently (by Menger's theorem) every pair of vertices is joined by $k$ internally vertex-disjoint paths~\cite{menger1927}. Taking $k=\beta+1$ and $|B|\leq\beta$ yields connectivity of $G(\Pcfg(\Ares)) \setminus B$ for \emph{any} $B$ of size at most $\beta$, regardless of its composition.
\end{proof}
 
Proposition~\ref{prop:connectivity} collapses a requirement over all $\binom{N}{\beta}$ possible removals, $\forall B$ with $|B|\leq\beta$, $G(\Pcfg(\Ares)) \setminus B$ connected, into a single scalar test, $\kappa(G)\geq\beta+1$, checked by the connectivity filter once per commit rather than enumerated per attack.

\subsection{Curvature}
We use the \emph{constrained curvature} of Zhou et al.~\cite{zhou2019} to quantify the worst-case informational overlap between selected trajectories. For a feasible family $\Ifam$ and a monotone submodular $f$ with $f(s)\neq0$ for every singleton trajectory,
\begin{equation}
  \nu_f(\Ifam)=1-\min_{S\in\Ifam}\min_{s\in S}
  \frac{f(S)-f(S\setdiff\{s\})}{f(s)}\in[0,1].
  \label{eq:constrained-curvature}
\end{equation}

Here $\nu_f = 0$ means the trajectories provide independent information, while $\nu_f \to 1$ indicates severe diminishing returns. We also define the global curvature, $\kf=1-\min_{v}\frac{f(\Traj_\R)-f(\Traj_\R\setdiff\{v\})}{f(v)}$~\cite{conforti1984, iyer2013}, which similarly bounds overlap across the entire ground set, for the greedy recursion.

\subsection{A Greedy Bound under the Connectivity Constraint}
\label{sec:greedy}
Because $\Ifam$ is not a matroid, the Conforti--Cornu\'ejols curvature-greedy guarantee does not immediately transfer. We re-derive it with respect to the constrained optimum, isolating the cost of the connectivity filter into a single interpretable constant.
 
\begin{definition}[Price of connectivity per step]
\label{def:rho}
At greedy step $k$, let $\gunc_k$ be the largest marginal gain over all remaining single-robot trajectories, and let $\gcon_k$ be the largest marginal gain over those trajectories that keep the committed graph $(\beta{+}1)$-connected. Define:
\begin{equation}
  \rho=\min_{k:\,\gunc_k>0}\frac{\gcon_k}{\gunc_k}\in[0,1].
  \label{eq:rho}
\end{equation}
\end{definition}
A feasible candidate always exists at every step (Proposition~\ref{prop:connectivity}), so $\gcon_k$ is well defined; $\rho$ is the worst-case fraction of greedy marginal value retained under the constraint, with $\rho=1$ when connectivity never throttles the greedy choice and $\rho\to0$ when it does so severely.
 
Let $\AoptI\in\arg\max_{A:\, \Aone\cup A\in\Ifam,|A|\leq\alpha-\beta}f(A)$ be the best feasible completion of the bait, where recall that $\Aone$ and $\Atwo$ denote the S1 and S2 outputs, respectively.

\begin{lemma}[Constrained curvature-greedy bound]
\label{lem:greedy}
Under Assumptions~\ref{ass:init}--\ref{ass:nondegen},
\begin{equation}
  f(\Atwo)\;\geq\;\rho\cdot\frac{1}{\kf}\big(1-e^{-\kf}\big)\,
  f(\AoptI).
  \label{eq:greedy-bound}
\end{equation}
\end{lemma}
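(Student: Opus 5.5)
I plan to rerun the Conforti--Cornu\'ejols curvature recursion on the stage S2 iterates. The only change is that each greedy gain is first compared with the unconstrained best gain $\gunc_k$, and only then with the marginals of $\AoptI$. Write $A^{(k)}$ for $\Atwo$ after $k$ commits, so $A^{(0)}=\emptyset$ and $A^{(\alpha-\beta)}=\Atwo$. Let $g_k=f(A^{(k)})-f(A^{(k-1)})$ be the realized gain at step $k$.

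\textbf{Per-step inequality.} By construction of line 13 and Definition~\ref{def:rho}, the committed trajectory attains $\gcon_k\ge\rho\,\gunc_k$. The rigid fallback has gain $\ge 0$ and is itself a feasible candidate, so it is covered by $\gcon_k$ as well. I would compare against $\gunc_k$ rather than against elements of $\AoptI$ directly, because a single element $o\in\AoptI$ need not be a connectivity-feasible addition to the current shadow configuration (Proposition~\ref{prop:nonmatroid}); the unconstrained maximizer sidesteps that issue. The $\alpha-\beta$ elements of $\AoptI$ lie on the robots not yet committed, or on robots whose trajectory could be swapped. I will take $\gunc_k$ to range over all such single-robot trajectories, so that $\gunc_k\ge \marg{o}{A^{(k-1)}}$ for every $o\in\AoptI\setdiff A^{(k-1)}$. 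Monotonicity and submodularity then give $f(\AoptI)\le f(A^{(k-1)}\cup\AoptI)\le f(A^{(k-1)})+\sum_{o}\marg{o}{A^{(k-1)}}$.

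\textbf{Curvature sharpening.} Following Conforti--Cornu\'ejols, I would replace the crude bound by the curvature form $f(A^{(k-1)}\cup\AoptI)\ge f(\AoptI)+(1-\kf)f(A^{(k-1)})$. This follows by peeling off the elements of $A^{(k-1)}$ one at a time and using $\marg{v}{X}\ge(1-\kf)f(v)$ for every $X\subseteq\Traj_\R\setdiff\{v\}$, which comes from the definition of $\kf$ and submodularity. Combining this with $\sum_o \marg{o}{A^{(k-1)}}\le(\alpha-\beta)\gunc_k\le\frac{\alpha-\beta}{\rho}g_k$ gives the recursion
\[
f(\AoptI)\le \kf\, f(A^{(k-1)})+\frac{m}{\rho}\,g_k,\qquad m:=\alpha-\beta .
\]
Bounding the sum by $(\alpha-\beta)\gunc_k$ here is exactly where the factor $1/\rho$ enters.

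\textbf{Solving the recursion.} Rearranging gives $f(A^{(k)})\ge\bigl(1-\tfrac{\rho\kf}{m}\bigr)f(A^{(k-1)})+\tfrac{\rho}{m}f(\AoptI)$. Unrolling it yields $f(\Atwo)\ge \tfrac{1}{\kf}\bigl(1-(1-\tfrac{\rho\kf}{m})^m\bigr)f(\AoptI)\ge\tfrac1{\kf}(1-e^{-\rho\kf})f(\AoptI)$. Since $x\mapsto(1-e^{-x})/x$ is decreasing, $1-e^{-\rho\kf}\ge\rho(1-e^{-\kf})$ for $\rho\in[0,1]$. This gives \eqref{eq:greedy-bound}, with the cases $\rho=0$ and $\kf\to0$ holding trivially or by a limit.

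\textbf{Main obstacle.} The delicate step is the per-step comparison $\gunc_k\ge\marg{o}{A^{(k-1)}}$ for all $o\in\AoptI$. The elements of $\AoptI$ may sit on robots already committed in S2, and $\Atwo$ is evaluated without $\Aone$, while the greedy gains are measured relative to $\Aone\cup\Atwo$. My first attempt would be to interpret ``remaining single-robot trajectories'' in Definition~\ref{def:rho} broadly, over all of $\Traj_\R$. I would then run the recursion on $f(\Aone\cup A^{(k)})-f(\Aone)$, and finally use monotonicity to pass back to $f(\Atwo)$ and $f(\AoptI)$. If the gap is absorbed by the $\kf$ term this goes through; otherwise I would state the lemma with this broad reading of $\gunc_k$.
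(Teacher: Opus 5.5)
\textbf{A genuine gap remains.} Your handling of $\rho$ differs from the paper's and is the sounder of the two. The paper bounds a hypothetical unconstrained greedy $\Atwo'$ and then asserts $f(\Atwo)\ge\rho f(\Atwo')$. That does not follow, because $\gunc_k$ is measured along the algorithm's own iterates, not along $\Atwo'$. Putting $g_k\ge\rho\,\gunc_k$ directly into the recursion on the real iterates is the right way to charge the filter, and given the per-step domination it even yields the stronger $\frac{1}{\kf}(1-e^{-\rho\kf})$. However, the step you flag as the main obstacle is a real gap, and it cannot be closed for the lemma as stated.

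\textbf{Availability of $\AoptI$.} Since $\Aone\cup\AoptI\in\Ifam$, the set $\AoptI$ has exactly one trajectory on every S2 robot. After $k-1$ commits, $k-1$ of those trajectories sit on robots that line 13 can no longer choose. With $\gunc_k$ taken over remaining robots, as in Definition~\ref{def:rho}, the bound $\gunc_k\ge\marg{o}{A^{(k-1)}}$ need not hold for them. What you have is greedy over a partition matroid, not a cardinality constraint, and the uniform-matroid factor is then unavailable. Here is a concrete instance.
\begin{itemize}
\item Take $N=3$ and $\beta=1$, with all endpoints in mutual range, so $\rho=1$.
\item The bait robot covers $12$ targets disjoint from everything else.
\item Robot $r_2$ can cover $X\cup\{w\}$ with $|X|=10$, or a disjoint $10$-set $Y$.
\item Both trajectories of $r_3$ cover $X$.
\end{itemize}
Greedy takes $X\cup\{w\}$ and then gains nothing, so $f(\Atwo)=11$. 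Meanwhile $f(\AoptI)=20$ and $\kf=1$, and $11<(1-e^{-1})\cdot 20$.

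Your broad reading of $\gunc_k$ restores the per-step inequality, but it changes what $\rho$ means. In this example it drops to $0$ although connectivity never binds. With $\rho$ as defined, what does go through (for the rule $\marg{s}{\Atwo}$; see below) is a pairing argument. Pair step $k$ with the element $o_k\in\AoptI$ on the robot committed at that step, which was still available, and peel only the elements of $\Atwo\setdiff\AoptI$. This gives $\rho f(\AoptI)\le(1+\rho\kf)f(\Atwo)$, the partition-matroid analogue, which is weaker than the claimed factor. The paper's Step~(i) asserts the same availability without proof, so you have located a real hole rather than missed an idea.

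\textbf{The bait.} Line 13 ranks candidates by $\marg{s}{\Aone\cup\Atwo}$, but the lemma compares $f(\Atwo)$ with $f(\AoptI)$, both evaluated without $\Aone$. Running your recursion on $f(\cdot\mid\Aone)$ is fine, since its curvature is at most $\kf$. Subadditivity then gives $f(\Atwo)\ge f(\Aone\cup\Atwo)-f(\Aone)$. On the comparison side, however, you would need $f(\Aone\cup\AoptI)-f(\Aone)\ge f(\AoptI)$, and subadditivity gives the reverse inequality. Curvature cannot absorb this. In the same three-robot setting, let the bait cover $X\cup\{z\}$, and let each of $r_2,r_3$ cover either $X$ or a private fresh target. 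Line 13 picks the two fresh targets, so $f(\Atwo)=2$, while $f(\AoptI)=11$, $\rho=1$ under either reading, and $\kf=1$. So the lemma needs one of two repairs: the S2 rule $\marg{s}{\Atwo}$, which the paper's Step~(i) tacitly uses following Tzoumas et al., or a restatement for $f(\cdot\mid\Aone)$.

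\textbf{A smaller point.} Your curvature step $f(A^{(k-1)}\cup\AoptI)\ge f(\AoptI)+(1-\kf)f(A^{(k-1)})$ is false whenever the two sets share a trajectory; take them equal. Peeling only produces terms for $A^{(k-1)}\setdiff\AoptI$. The shared elements therefore need the Conforti--Cornu\'ejols bookkeeping, or the pairing above, where they only carry coefficient $\rho\le 1+\rho\kf$.
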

\begin{proof}
Index the greedy steps $k=1,\dots,\alpha-\beta$, and write $\Atwo^{(k)}$ for the partial greedy set after $k$ commits. Let $O\eqdef\AoptI$, with $|O|\leq\alpha-\beta$.

\emph{Step (i): submodular decomposition relative to the feasible optimum.} By monotonicity and submodularity, at each step $k$,
\begin{equation*}
\begin{aligned}
  f(O)-f(\Atwo^{(k)}) &\leq f(O\cup\Atwo^{(k)})-f(\Atwo^{(k)}) \\
  &\leq \sum_{s\in O\setdiff\Atwo^{(k)}} \marg{s}{\Atwo^{(k)}}.
\end{aligned}
\end{equation*}
Every $s\in O$ is, by definition of $O$ as a feasible completion of the bait, a candidate the greedy step could have considered, so it lies in the feasible pool whose best element defines $\gunc_{k+1}$. This decomposition is exactly why the bound is stated against the constrained optimum $\foptI$: elements of an unconstrained optimum could be infeasible to add and would break it.

\emph{Step (ii): curvature relates the marginal sum to a single greedy step.} Applying the curvature definition to the marginals bounded in Step (i) and telescoping via the standard Conforti--Cornu\'ejols recursion~\cite{conforti1984} yields $f(\Atwo')\geq\tfrac{1}{\kf}(1-e^{-\kf})f(O)$ for the hypothetical greedy $\Atwo'$ that at each step takes the unconstrained best feasible marginal $\gunc_k$.
Crucially, this recursion remains valid because submodularity and curvature are properties of $f$ alone; they do not require the optimal set $O$ to be drawn from a strict matroid structure.


\emph{Step (iii): discount by $\rho$.} The actual algorithm commits the connectivity-feasible best, achieving marginal $\gcon_k\geq\rho\gunc_k$ at every step by \eqref{eq:rho}. Replacing each step's progress by its $\rho$-fraction and re-telescoping yields $f(\Atwo) \geq \rho f(\Atwo')$. This scales the bound in (ii) uniformly by $\rho$, giving \eqref{eq:greedy-bound}.
\end{proof}

\subsection{Main Theorem}
\label{sec:main}

To bound Algorithm~\ref{alg:rescon}'s performance, we first use Lemma 1 to bound $f(\Atwo)$ in terms of its constrained optimum $f(\AoptI)$ and then apply Lemma 2 (proven below) to bound $f(\AoptI)$ in terms of the true worst-case optimum $\foptI$.

\begin{theorem}[Resilient, connectivity-preserving approximation]
\label{thm:main}
Under Assumptions~\ref{ass:init}--\ref{ass:nondegen}, the output $\Ares$ of Algorithm~\ref{alg:rescon} satisfies,
\begin{flalign}
  f\big(\Ares\setdiff\Bstar(\Ares)\big) &\geq \big[\max(1-\nuf,\hNb)\big]^2\cdot\rho \nonumber \\
  &\quad \cdot \frac{1}{\kf}\big(1-e^{-\kf}\big)\,\foptI, \label{eq:main-bound}
\end{flalign}
where $\hNb=\max\!\big(\tfrac{1}{1+\beta},\tfrac{1}{N-\beta}\big)$.
\end{theorem}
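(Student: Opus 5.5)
The plan follows the Tzoumas/Zhou resilient-greedy template. The two squared factors come from two separate losses: one at the attack on $\Ares$, and one at the comparison of $\AoptI$ with $\foptI$. Lemma~\ref{lem:greedy} is the bridge between these two comparisons. Throughout, write $B^\star=\Bstar(\Ares)$, and split it into $B_1=B^\star\cap\Aone$ and $B_2=B^\star\cap\Atwo$. Since $|B^\star|\le\beta=|\Aone|$, we have $|B_2|\le|\Aone\setminus B_1|$, and the surviving bait $\Aone\setminus B_1$ contains at least $|B_2|$ elements.

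\emph{Step 1 (attack loss on the algorithm's side).} I would show that
\[
f(\Ares\setminus B^\star)\ \ge\ \max(1-\nuf,\hNb)\, f(\Atwo).
\]
\begin{itemize}
\item \emph{Curvature branch.} Because $\Ares\in\Ifam$ (Proposition~\ref{prop:connectivity}), the constrained curvature \eqref{eq:constrained-curvature} applies to $\Ares$. Peeling elements off one at a time gives $f(\Ares\setminus B^\star)\ge(1-\nuf)\sum_{s\in\Ares\setminus B^\star}f(s)$. The surviving bait elements each have singleton value at least that of any element of $B_2$, by the S1 ordering by standalone coverage. So the surviving sum dominates $\sum_{s\in\Atwo}f(s)\ge f(\Atwo)$ by subadditivity.
\item \emph{Counting branch, $h(N,\beta)$.} If $B_2=\emptyset$, then $f(\Ares\setminus B^\star)\ge f(\Atwo)$ directly.
  \begin{itemize}
  \item Otherwise some bait element $s_1$ survives, and $f(s_1)\ge\max_{s\in\Atwo}f(s)\ge f(\Atwo)/|\Atwo|=f(\Atwo)/(N-\beta)$. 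This gives the factor $\tfrac{1}{N-\beta}$.
  \item The factor $\tfrac{1}{1+\beta}$ comes from $f(\Atwo)\le f(\Atwo\setminus B_2)+\sum_{s\in B_2}f(s)\le f(\Ares\setminus B^\star)+\beta\max_{s\in\Aone\setminus B_1}f(s)\le(1+\beta)f(\Ares\setminus B^\star)$.
  \end{itemize}
\end{itemize}

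\emph{Step 2 (apply Lemma~\ref{lem:greedy}).} This gives $f(\Atwo)\ge\rho\,\tfrac{1}{\kf}(1-e^{-\kf})\,f(\AoptI)$.

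\emph{Step 3 (relating $\AoptI$ to $\foptI$).} Let $S^\star$ attain \eqref{eq:benchmark}. I would take the bait robots' trajectories as the removal set $B=S^\star\cap\Traj_{\R_1}$, where $\R_1$ is the set of bait robots, so that $|B|=\beta$. Then $\foptI\le f(S^\star\setminus B)$, which is the key inequality. The remainder $A=S^\star\setminus B$ has $|A|=N-\beta$.

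The difficulty is that $\Aone\cup A$ need not lie in $\Ifam$, because the bait positions differ from those of $S^\star$. So $A$ is not obviously an admissible completion in the definition of $\AoptI$. This is the main obstacle. I would try to close it by paying a second factor $\max(1-\nuf,\hNb)$. This is why the factor is squared, exactly as in Zhou et al.'s matroid proof, where the second factor absorbs exchanging the bait.

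Concretely, I would compare against a feasible competitor such as $\Ares$'s own $\Atwo$-type completion, or against $S^\star$ itself restricted to the non-bait robots. I would then use the same curvature/counting argument as in Step 1 to show $f(\AoptI)\ge\max(1-\nuf,\hNb)\,f(S^\star\setminus B)$. I expect to need to interpret $\AoptI$ loosely, as a best feasible completion among $\Ifam$-members' non-bait parts, to make this step go through rigorously.

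Chaining Steps 1–3 then yields \eqref{eq:main-bound}.
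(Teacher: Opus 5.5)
Your Steps~1 and~2 reproduce the paper's Case~B and Lemma~\ref{lem:greedy}. Step~3 is where the second factor $\max(1-\nuf,\hNb)$ has to come from, and it is not proved; with $\AoptI$ defined as a completion of the algorithm's own bait, it cannot be.

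You isolate the difficulty correctly. For $B=S^\star\cap\Traj_{\R_1}$, the comparison $\foptI\le f(S^\star\setdiff B)$ is free, and $\Aone\cup(S^\star\setdiff B)$ already meets the partition constraint. The only issue is whether $G\big(\Pcfg(\Aone\cup(S^\star\setdiff B))\big)$ is $(\beta{+}1)$-connected, and none of your repairs addresses that. Curvature and survivor-counting bounds compare values of subsets and are blind to the communication graph, so paying a factor cannot make an inadmissible competitor admissible. The admissible competitor $\Atwo$ only yields $f(\AoptI)\ge f(\Atwo)$, which is circular. Reading $\AoptI$ loosely, as the best completion of \emph{some} size-$\beta$ bait, forfeits Lemma~\ref{lem:greedy}, whose Step~(i) uses that $O$ completes $\Aone$ itself.

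In fact the inequality you need is false in general. Take $N=3$ and $\beta=1$, so $2$-connectivity forces a triangle. Put $K{+}1$ targets at a point $X$ reachable only by robot~$1$, and $K$ distinct targets at each of three mutually close points in a region reachable by all robots. Place $X$ and that region each within $\rc$ of the initial cluster but more than $\rc$ from each other, with one target at every other candidate endpoint. Robot~$1$ becomes the bait and commits to $X$. Every admissible completion must then keep robots~$2,3$ within $\rc$ of $X$, so $f(\AoptI)\le 2$. Placing the whole team in the rich region gives $\foptI\ge 2K$, and $\hNb=\tfrac12$. In this example the theorem's inequality itself still holds, but only because $\rho\le 1/K$; the slack comes from $\rho$, not from your Step~3.

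The paper fills exactly this slot with Lemma~\ref{lem:second}. It splits $S^\star$ into its $\beta$ highest-singleton elements $\tilde A_1$ and the remainder $\tilde A_2$. It then asserts that $\tilde A_2$ competes in the maximization defining $\AoptI$ because it is a feasible completion of \emph{a} size-$\beta$ bait, and lower-bounds $f(\tilde A_2)$ by curvature and counting. Citing that lemma would make your chain coincide with the paper's proof. However, the admissibility you flagged is asserted there rather than shown; it is your loose reading, and the example above violates it. Meanwhile the factor is spent on a comparison that is free, since $|\tilde A_1|=\beta$ already gives $f(\tilde A_2)\ge\foptI$. A correct argument needs a different ingredient, e.g.\ a greedy bound benchmarked against a class of completions that provably contains $S^\star\setdiff B$.

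Separately, both branches of your Step~1 (like the paper's Case~B) assume that each surviving bait element dominates every greedy element in singleton value. That does not follow from the S1 ordering. Robots are sorted by their best \emph{unconstrained} singleton, but the committed bait trajectory is the best connectivity-feasible one or the fallback $\tau_0(r)$. Either can be worth less than a greedy robot's committed trajectory.
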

\begin{proof}
We follow the bait/greedy case split over $\Ifam$. Write $\Bstar=\Bstar(\Ares)$,
$\Bstar_1=\Bstar\cap\Aone$, and $\Bstar_2=\Bstar\cap\Atwo$.
 
\emph{Case A: $\Bstar_2=\emptyset$.} Then $\Ares\setdiff\Bstar=\Atwo$, so by Lemma~\ref{lem:greedy} and Lemma~\ref{lem:second}, $f(\Ares\setdiff\Bstar)=f(\Atwo)\geq\rho\tfrac{1}{\kf}(1-e^{-\kf}) \max(1-\nu_f(\Ifam), h(N, \beta))\foptI$. Since $\max(1-\nuf,\hNb)\leq1$, the squared bound in \eqref{eq:main-bound} holds.
 
\emph{Case B: $\Bstar_2\neq\emptyset$.} Because $|\Bstar|=|\Aone|=\beta$ and at least one greedy element is removed, at least one bait element survives. The bait robots are those of largest single-trajectory coverage, each assigned its highest-coverage connectivity-feasible trajectory. Since the greedy elements are drawn from the same connectivity-feasible universe, each surviving bait element dominates each removed greedy element in absolute single-trajectory coverage. Applying the constrained-curvature counterpart of Tzoumas et al.'s Lemma~1 (using $\nuf$ in place of the global curvature) and value-domination gives the modularity-side bound $f(\Ares\setdiff\Bstar)\geq(1-\nuf)f(\Atwo)$. Independently, the resilience accounting of Zhou et al.~\cite{zhou2019} yields the cardinality-side bound $f(\Ares\setdiff\Bstar)\geq\hNb f(\Atwo)$. Taking the larger of the two gives $f(\Ares \setminus \Bstar) \ge \max(1-\nu_f(\Ifam), h(N, \beta))f(A_2)$. 
Substituting $f(\Atwo)$ using Lemma 1, and then substituting $f(\AoptI)$ using Lemma 2, yields the desired squared bound in~\eqref{eq:main-bound}.
Because the $\max$ factor is $\le 1$, this squared bound is the worst-case minimum, ensuring  \eqref{eq:main-bound} holds in all cases.
\end{proof}
 
We now formally bound the constrained optimum $f(\AoptI)$ against the worst-case optimum $\foptI$.

\begin{lemma}[Second resilience factor]
\label{lem:second}
$f(\AoptI)\geq\max(1-\nuf,\hNb)\cdot\foptI.$
\end{lemma}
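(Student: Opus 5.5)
The plan is to compare $\AoptI$ with the resilient optimum. Let $S^\star\in\Ifam$ attain $\foptI$ in \eqref{eq:benchmark}. The benchmark is a max--min over removals, so for \emph{every} $B\subseteq S^\star$ with $|B|\le\beta$ we have $f(S^\star\setdiff B)\ge\foptI$. The task is therefore to exhibit one feasible completion $A$ of the bait, meaning $\Aone\cup A\in\Ifam$ and $|A|\le\alpha-\beta$, whose value is comparable to some $f(S^\star\setdiff B)$. Optimality of $\AoptI$ then gives $f(\AoptI)\ge f(A)$.

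\emph{Step 1 (candidate completion).} I take $B$ to be the $\beta$ elements of $S^\star$ belonging to the robots in $\Aone$, together with any further elements needed to reach cardinality $\beta$. I set $A=S^\star\setdiff B$, the trajectories $S^\star$ assigns to the non-bait robots. Then $|A|=\alpha-\beta$ and $f(A)\ge\foptI$. To get the factor form, I compare $f(A)$ against $f(S^\star)$ in two ways.

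\emph{Step 2 (the $1-\nuf$ side).} Since $S^\star\in\Ifam$, definition \eqref{eq:constrained-curvature} gives $f(S^\star)-f(S^\star\setdiff\{s\})\ge(1-\nuf)f(s)$ for each $s\in S^\star$. Telescoping over the elements of $S^\star$ in a fixed order and using submodularity yields $f(A)\ge(1-\nuf)\sum_{s\in A}f(s)\ge(1-\nuf)f(S^\star)\ge(1-\nuf)\foptI$.

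\emph{Step 3 (the $\hNb$ side).} Monotone submodularity and averaging over the $\alpha-\beta=N-\beta$ elements of $A$ give $f(A)\ge\frac{|A|}{|S^\star|}f(S^\star)$ for a suitably chosen (best-value) subset. I expect this to yield $\frac{N-\beta}{N}$, and hence, after the standard Tzoumas-type accounting, $\max(\frac{1}{1+\beta},\frac{1}{N-\beta})$. Taking the better of the two bounds gives the stated $\max$.

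\emph{Main obstacle.} The hard step is feasibility of $A$ as a completion: we need $\Aone\cup A\in\Ifam$. The bait trajectories in $\Aone$ generally differ from those $S^\star$ assigns to the same robots, so $\Aone\cup A$ need not be $(\beta{+}1)$-connected. This is exactly where the non-matroid structure of Proposition~\ref{prop:nonmatroid} bites. My first attempt is to choose the completion differently: keep $S^\star$'s trajectories only where compatible, and fall back to the stay-in-place trajectories $\tau_0(r)$ otherwise, which preserves connectivity by the argument of Proposition~\ref{prop:connectivity}. I would then charge the lost value either to $\nuf$, since $\nuf$ is defined over $\Ifam$ and $\Aone\cup A$ must remain in it, or otherwise state the required compatibility as an implicit assumption on the benchmark. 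If this fails, I would redefine the inequality against $\foptI$ restricted to selections containing $\Aone$.
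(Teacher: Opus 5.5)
\textbf{Verdict.} There is a genuine gap, and it is exactly the one you flag as the main obstacle: feasibility of $\Aone\cup A$. It is fatal rather than merely hard. With $\AoptI$ defined as a completion of the \emph{actual} S1 output $\Aone$, the inequality is false, so no repair of the feasibility step can succeed.

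\textbf{Counterexample.} Take $N=3$, $\beta=1$, $\rc=1$, sensing radius $0.1$, and one target at the origin. All three robots start at the origin, which is a triangle, so Assumption~\ref{ass:init} holds. Besides staying put, the robots have these moves:
\begin{itemize}
\item $r_1$ may move to $(-1,0)$, where it sees $3$ targets, or to $(1,-\tfrac12)$;
\item $r_2$ may move to $(1,0)$, and $r_3$ to $(1,\tfrac12)$.
\end{itemize}
Each of the three eastern endpoints sees $2$ private targets, so all singletons are positive.

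Stage S1 makes $r_1$ the bait and sends it to $(-1,0)$, which keeps the shadow graph a triangle. Any completion must put $r_2$ and $r_3$ within distance $1$ of $(-1,0)$. This forces both to stay at the origin, so $f(\AoptI)=1$.

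Now place the robots at the three eastern endpoints. These are pairwise within distance $1$, and every single removal leaves $4$ targets, so $\foptI\ge 4$. Since $\hNb=\tfrac12$, the right-hand side is at least $2>1$. Replacing $2,3$ by $L,L+1$ drives the ratio $f(\AoptI)/\foptI$ to $0$.

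\textbf{Why the fallback repair fails.} Commit $S^\star$'s non-bait trajectories one at a time, with $\tau_0$ as fallback. By the induction in Proposition~\ref{prop:connectivity}, this does give a feasible completion. But the value it discards is a connectivity effect, and $\nuf$ cannot absorb it, because $\nuf$ only measures diminishing returns inside sets already in $\Ifam$. In the example, everything except the origin target is discarded.

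\textbf{Comparison with the paper's proof.} The paper does not resolve this either. It asserts that the remainder $\tilde A_2$ of a virtual bait (the $\beta$ highest-singleton elements of $S^\star$) ``competes in the maximization defining $\AoptI$''. That is the same unproven step in a worse form. The robots of $\tilde A_1$ need not be those of $\Aone$, and then $\Aone\cup\tilde A_2$ violates even the one-trajectory-per-robot constraint.

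Your choice of $B$, namely $S^\star$'s trajectories on the bait robots, is the right one. Once feasibility is granted, it gives $f(A)\ge\foptI$ outright. So your Steps 2 and 3 are redundant, as are the paper's curvature and cardinality factors. They are also unsound as written:
\begin{itemize}
\item the final inequality $\sum_{s\in A}f(s)\ge f(S^\star)$ in Step 2 fails whenever the bait robots carry most of $f(S^\star)$;
\item Step 3 applies a best-subset averaging bound to a subset that is pinned down by the bait robots, and it never actually reaches $\max(\tfrac{1}{1+\beta},\tfrac{1}{N-\beta})$.
\end{itemize}

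\textbf{What can honestly be claimed.} Only your last two options survive. One is an explicit compatibility hypothesis: some optimal $S^\star$ stays in $\Ifam$ when its bait-robot trajectories are replaced by $\Aone$. The other is a benchmark restricted to selections containing $\Aone$. Either yields the inequality with factor $1$, but both change the statement and the benchmark of Theorem~\ref{thm:main}.

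Alternatively, let $\AoptI$ range over completions of \emph{any} size-$\beta$ bait, as the paper's wording suggests. That makes this lemma trivial, but it breaks Step (i) of Lemma~\ref{lem:greedy}, which needs the elements of $\AoptI$ to be feasible additions for the actual greedy run.
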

\begin{proof}
Let $S^{\star}\in\Ifam$ attain $\foptI$, with worst-case removal
$\Bstar(S^{\star})$, and split it into the $\beta$ highest-singleton-value elements (the \emph{virtual bait} $\tilde A_1$) and the remainder $\tilde A_2\eqdef S^{\star}\setdiff\tilde A_1$. Since $\tilde A_2$ is a feasible completion of a size-$\beta$ bait, it competes in the maximization defining $\AoptI$, so $f(\AoptI)\geq f(\tilde A_2)$. Removing the $\beta$ highest-value elements extracts the maximum possible value under the curvature bound defined by Zhou et al.~\cite{zhou2019}, guaranteeing that the remainder retains at least 
$f(\tilde A_2)\geq (1-\nuf)\foptI$. The survivor-count accounting of Zhou et al.~\cite{zhou2019} gives the cardinality bound $f(\tilde A_2)\geq\hNb \foptI$. Taking the larger yields the claim.
\end{proof}

We formally justify multiplying the bounds from Lemmas~\ref{lem:greedy} and~\ref{lem:second} by ensuring the connectivity penalty $\rho$ is conditionally independent of the Stage S1 trajectory selections:

\begin{proposition}[Stage independence under intra-stage filtering]
\label{prop:independence}
Suppose the connectivity filter is \emph{intra-stage}: the feasible S2 completions of the bait depend on S1 only through the fixed endpoint set $\Pcfg(\Aone)$, and the retention ratios $\gcon_k/\gunc_k$ defining $\rho$ are evaluated against that same configuration. Then the loss from the S1 bait commitment (Lemma~\ref{lem:second}) and the S2 loss (Lemma~\ref{lem:greedy}) compose multiplicatively, yielding the squared factor in \eqref{eq:main-bound}.
\end{proposition}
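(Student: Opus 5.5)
The plan is to show that the two inequalities being chained, Lemma~\ref{lem:greedy} and Lemma~\ref{lem:second}, refer to one and the same intermediate quantity $f(\AoptI)$, and that the constant $\rho$ in the first is fixed once $\Pcfg(\Aone)$ is fixed. Concretely, I would condition on the Stage~S1 output. Under the intra-stage hypothesis, the S2 feasibility test at every step depends only on two things: the committed bait endpoints $\Pcfg(\Aone)$, and the S2 commits so far. Hence, conditional on $\Pcfg(\Aone)$, three objects are deterministic functions of that configuration and of $f$: the feasible pool at each step, the ratios $\gcon_k/\gunc_k$, and the maximizer $\AoptI$ over completions $A$ with $\Aone\cup A\in\Ifam$.

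\textbf{Step 1: identify the common intermediate quantity.} I would first verify that $\AoptI$ in Lemma~\ref{lem:greedy} is the same set as $\AoptI$ in Lemma~\ref{lem:second}. Both are defined as the best feasible completion of the same fixed bait, so this is a check on definitions rather than a new argument.

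\textbf{Step 2: fix $\rho$ given $\Pcfg(\Aone)$.} Next I would argue that $\rho$, as computed inside the proof of Lemma~\ref{lem:greedy}, is a single number once $\Pcfg(\Aone)$ is fixed. This uses Definition~\ref{def:rho} with every ratio evaluated against that configuration. In particular, $\rho$ does not change when Lemma~\ref{lem:second} replaces the actual bait by the virtual bait $\tilde A_1$. This is exactly where the intra-stage hypothesis is used: it prevents the S1 choice from entering the bound a second time through $\rho$.

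\textbf{Step 3: compose the inequalities.} With the two steps above, composition is monotone chaining of nonnegative quantities:
\[
f(\Atwo)\ \ge\ \rho\,\tfrac{1}{\kf}(1-e^{-\kf})\,f(\AoptI)\ \ge\ \rho\,\tfrac{1}{\kf}(1-e^{-\kf})\,\max(1-\nuf,\hNb)\,\foptI .
\]
All factors are nonnegative: $\rho\in[0,1]$, the curvature factor is positive, and $\foptI>0$ by Assumption~\ref{ass:nondegen}. So the product is legitimate.

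\textbf{Step 4: pick up the second resilience factor.} Finally, I would multiply by the Case~B factor $\max(1-\nuf,\hNb)$ from the proof of Theorem~\ref{thm:main}, which relates $f(\Ares\setdiff\Bstar)$ to $f(\Atwo)$. That factor depends only on the realized sets $\Aone$ and $\Atwo$, not on $\rho$, and it yields the square in \eqref{eq:main-bound}.

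\textbf{Main obstacle.} The hard part is Step~2. One must rule out that the virtual split of $S^\star$ in Lemma~\ref{lem:second} implicitly changes the bait configuration against which $\rho$ and the feasible pool are measured. If $\tilde A_2$ is a feasible completion of $\tilde A_1$ rather than of $\Aone$, the claim $f(\AoptI)\ge f(\tilde A_2)$ requires the feasible completions of the two baits to coincide. My first attempt would be to state this as part of the intra-stage hypothesis: feasibility of S2 completions is measured against the fixed set $\Pcfg(\Aone)$ alone. I would then note that without this assumption the multiplicative composition can fail, and only the weaker bound with a bait-dependent $\rho$ survives.
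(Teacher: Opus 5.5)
Your proposal is correct and follows the paper's own argument: chain Lemma~\ref{lem:greedy} and Lemma~\ref{lem:second} through the common term $f(\AoptI)$, use the intra-stage hypothesis to fix $\rho$ once $\Pcfg(\Aone)$ is fixed, and then multiply by the Case~B factor $\max(1-\nuf,\hNb)$ to obtain the square. The obstacle you flag, whether $\tilde A_2$ is actually a feasible completion of $\Aone$, concerns the step $f(\AoptI)\geq f(\tilde A_2)$ inside Lemma~\ref{lem:second}, which this proposition (like the paper's proof of it) takes as given; you are right that the stated intra-stage hypothesis does not supply that step, so folding it into the hypothesis would be an extra assumption rather than a consequence.
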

\begin{proof}
The bounds from Lemma~\ref{lem:greedy} and Lemma~\ref{lem:second} chain through the common term $f(\AoptI)$. The intra-stage hypothesis confines the influence of the S1 choice to the endpoints $\Pcfg(\Aone)$; freezing these renders $\rho$ conditionally independent of the specific S1 trajectories, so the two worst-case fractions multiply. Combining with the Case~B inequality $f(\Ares\setdiff\Bstar)\geq \max(1-\nuf,\hNb)f(\Atwo)$ yields the squared bound in Theorem~\ref{thm:main}.
\end{proof}
 
\begin{remark}[Scope of the square]
\label{rem:square}
The squared factor holds under Assumptions~\ref{ass:init}--\ref{ass:nondegen} and the intra-stage hypothesis of Proposition~\ref{prop:independence} on every non-degenerate instance. When $\foptI = 0$, \eqref{eq:main-bound} is vacuously true but the ratio is the indeterminate $0/0$; Assumption~\ref{ass:nondegen} excludes exactly this floor-collapse set. Section~\ref{sec:experiments} confirms that the $\approx 0.7\%$ of swept instances violating the squared bound are precisely these degenerate cases, independent of the connectivity regime.
\end{remark}
 
\begin{corollary}[Bridge to the unconstrained optimum]
\label{cor:bridge}
Suppose there is $\gamma\in(0,1]$ with $\foptI\geq\gamma\fopt$ ($\gamma=1$ whenever some unconstrained resilient optimum is already $(\beta{+}1)$-connected). Then $f(\Ares\setdiff\Bstar(\Ares))\geq [\max(1-\nuf,\hNb)]^2\rho\tfrac{1}{\kf}(1-e^{-\kf})\gamma\fopt$.
\end{corollary}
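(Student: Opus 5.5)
The corollary is a direct chaining of Theorem~\ref{thm:main} with the hypothesis $\foptI\geq\gamma\fopt$, so the plan is short. First, invoke Theorem~\ref{thm:main}, which (under Assumptions~\ref{ass:init}--\ref{ass:nondegen}) gives
\begin{equation*}
f\big(\Ares\setdiff\Bstar(\Ares)\big)\geq C\,\foptI,\qquad C\eqdef\big[\max(1-\nuf,\hNb)\big]^2\rho\,\tfrac{1}{\kf}\big(1-e^{-\kf}\big).
\end{equation*}
Second, observe that $C\geq 0$. Indeed, $\rho\in[0,1]$ by Definition~\ref{def:rho}. The max factor is at least $\hNb>0$. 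The curvature term $\tfrac{1}{\kf}(1-e^{-\kf})$ is positive for $\kf\in(0,1]$, and is interpreted as its limit $1$ when $\kf=0$. Because $C\geq 0$, multiplying the inequality $\foptI\geq\gamma\fopt$ by $C$ preserves its direction, giving $C\foptI\geq C\gamma\fopt$. Combining the two displays yields the claim.

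Next, I would justify the parenthetical remark that $\gamma=1$ is admissible when some unconstrained resilient optimum $S^\circ$ is already $(\beta{+}1)$-connected. In that case $S^\circ$ satisfies both the partition constraint and the connectivity constraint, so $S^\circ\in\Ifam$. Hence
\begin{equation*}
\foptI\geq\min_{|B|\leq\beta}f(S^\circ\setdiff B)=\fopt.
\end{equation*}
Together with the inequality $\foptI\leq\fopt$ noted after \eqref{eq:benchmark}, this gives equality, so $\gamma=1$ works.

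There is no real obstacle here. The only points needing care are the sign of $C$, so that the inequality direction is preserved, and the $\kf=0$ convention for the curvature factor. All substantive difficulty is inherited from Theorem~\ref{thm:main}, which the paper lets me assume.
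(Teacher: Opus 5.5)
Your proposal is correct and is essentially the intended argument: the paper gives no separate proof, treating the corollary as an immediate chaining of Theorem~\ref{thm:main} with $\foptI\geq\gamma\fopt$, and your checks that the constant is nonnegative and that $\gamma=1$ is admissible when an unconstrained resilient optimum already lies in $\Ifam$ supply the details the paper leaves implicit. The corollary inherits every hypothesis Theorem~\ref{thm:main} rests on, including the intra-stage condition of Proposition~\ref{prop:independence} noted in Remark~\ref{rem:square}.
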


\section{Experiments}
\label{sec:experiments}

Our experiment objectives are (i) to characterize the connectivity penalty $\rho$ across communication densities and team sizes, establishing that it remains bounded away from zero in realistic instances, (ii) to validate the feasibility of Algorithm~\ref{alg:rescon}'s outputs (Proposition~\ref{prop:connectivity}), (iii) to test the squared resilience factor derived in \eqref{eq:main-bound}, and (iv) to evaluate the algorithm's operational resilience and connectivity maintenance over multi-planning cycle deployments.

\subsection{System Model}
We instantiate a multi-robot active target tracking formulation, following Shi et al.~\cite{shi2021}. A team of $N$ aerial robots flies at fixed altitude in a bounded arena $W=[0,L]^2$. Robot $i$ occupies position $x_i\in W$ with a downward camera of sensing radius $\rs$ under a condition where two robots communicate if and only if $\|x_i-x_j\|\leq\rc$. During each planning cycle, robot $i$ may move within a disc of radius $\rmove$, discretized into $D$ candidate trajectories, while $M$ ground targets move by a single-integrator model with reflecting boundaries. The objective is the number of distinct targets observed by at least one robot,
\begin{equation}
  f(A)=\Big|\bigcup_{i\in A}T(x_i)\Big|,\quad
  T(x_i)=\{j:\|p^j-x_i\|\leq\rs\},
  \label{eq:cover}
\end{equation}
a monotone submodular coverage function with total curvature $\kf=1-1/M$. This closed form gives direct control over curvature. After the team commits $A$, a rational $\beta$-adversary removes $\Bstar(A)=\arg\min_{|B|\leq\beta}f(A\setdiff B)$, which may be computed exactly for small teams. 

\begin{figure*}[t]
  \centering
  \includegraphics[width=0.67\textwidth]{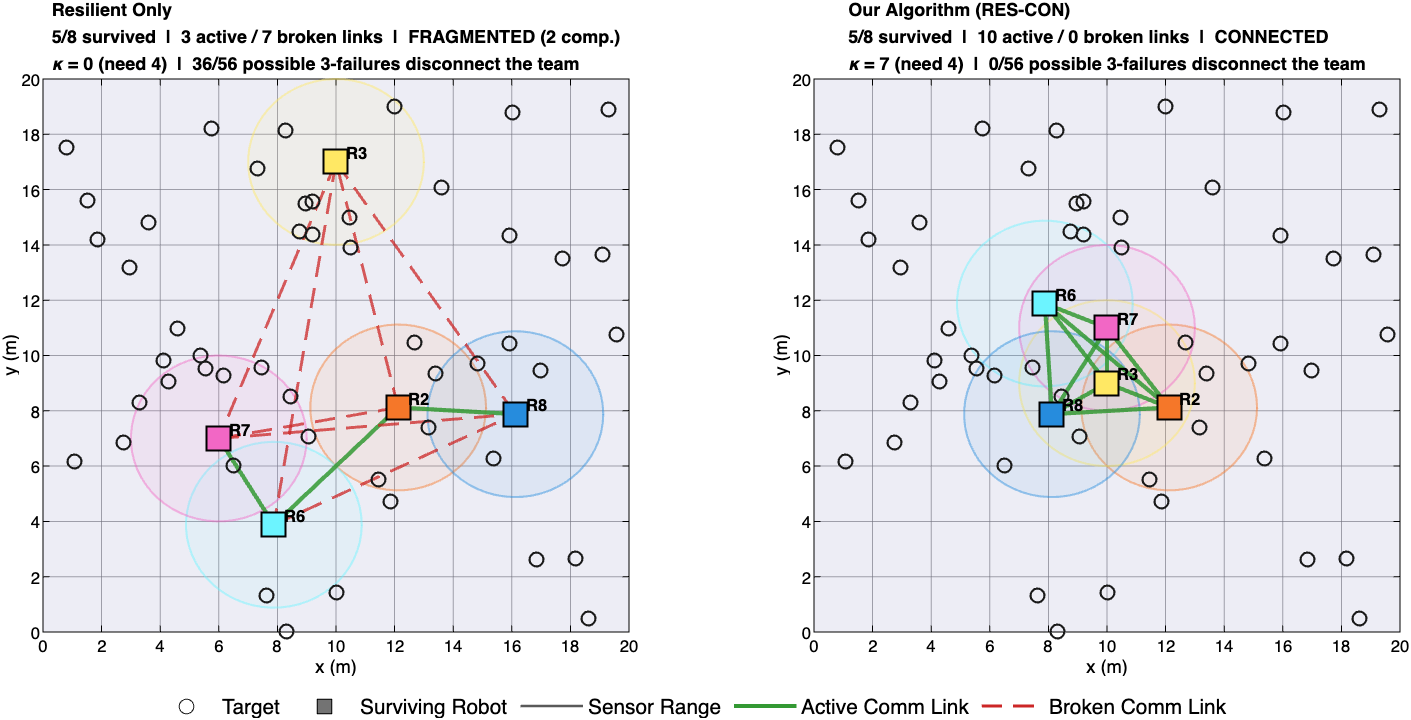}
  \caption{Post-attack topology after removing 3 bait robots from an 8-robot team. \textbf{(a) Resilient Only} \cite{zhou2019}: unconstrained $S_2$ (resilient set) selection leaves survivors fragmented ($\kappa=0$, 2 components); 36/56 possible 3-robot failures would disconnect them.
  \textbf{(b) RES-CON}: adding a $k$-connectivity constraint ($k=4$) keeps survivors fully connected ($\kappa=7$), with 0/56 possible failures.}
  \label{fig:comp}
\end{figure*}

We compare against baselines spanning the connectivity/resilience design space: \textbf{Pure Greedy} (no constraints), \textbf{Resilient-Only} (Tzoumas et al.~\cite{tzoumas2017}, no connectivity), \textbf{$1$-Connected-Only} (Hochbaum--Rao~\cite{hochbaum2020}, CSM from Shi et al.~\cite{shi2021}), \textbf{$(\beta{+}1)$-Connected-Only} (connectivity, no adversary), and \textbf{Fallback-ablated} (ours without the rigid-shift fallback). We denote our method \textbf{RES-CON}. Figure~\ref{fig:comp} visualizes a post-attack comparison between \textbf{RES-CON} and the \textbf{Resilient-Only} baseline.

\subsection{Measuring the Price of Connectivity \texorpdfstring{$\rho$}{rho}}
\label{sec:exp-rho}
At each greedy step, we record the marginal gains $\gunc_k$ and $\gcon_k$ and the running minimum of their ratio, i.e., the realized $\rho$~\eqref{eq:rho}. We sweep density $\rc/\rmove$ across sparse-to-dense regimes and use $N\in\{6,10,20,40\}$ and $\beta \in \{N/4, N/2\}$, over $100$ random initializations per parameter set.

\begin{figure*}[t]
  \centering
  \includegraphics[width=0.63\textwidth]{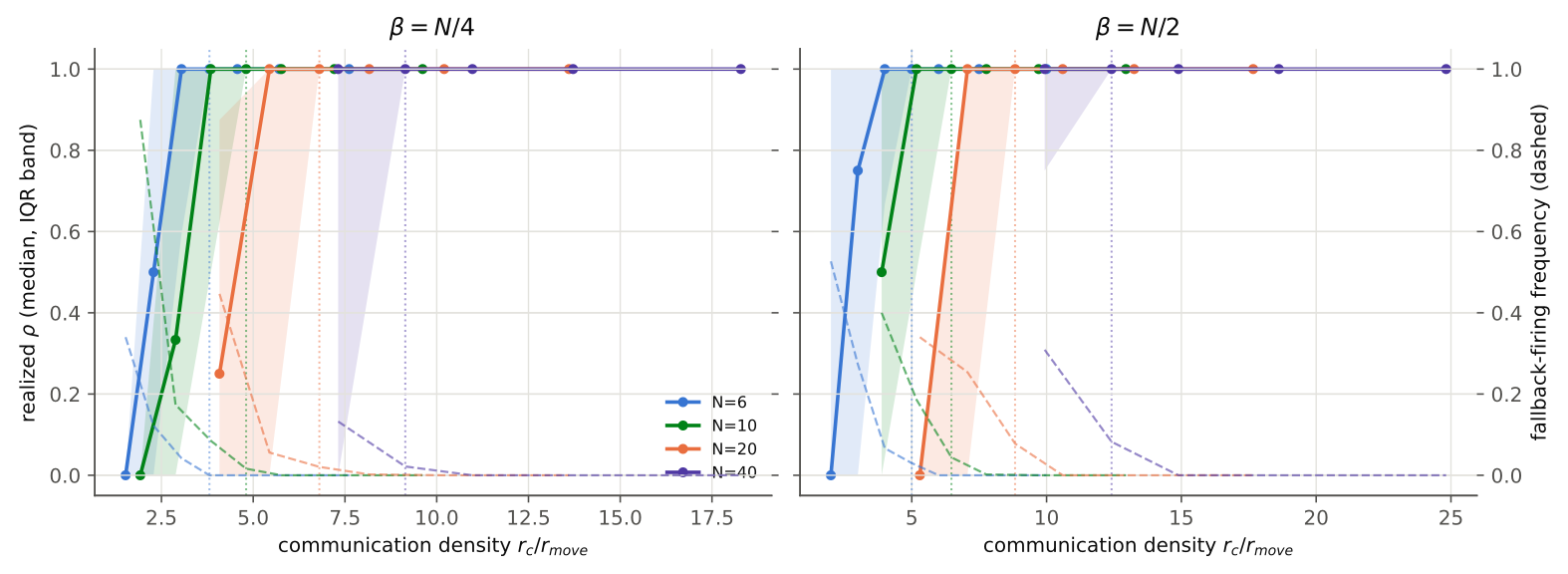}
  \caption{Realized price of connectivity $\rho$ (solid: median; shaded band: interquartile range) and fallback-firing frequency (dashed) versus communication density, for $\beta=N/4$ (left) and $\beta=N/2$ (right). The dotted verticals mark the $(\beta{+}1)$-connectivity percolation threshold. Above the threshold $\rho$ is a flat $1.0$ and the fallback frequency collapses to zero. Below it, $\rho$ degrades as the fallback activates.}
  \label{fig:rho}
\end{figure*}

Above the connectivity percolation threshold, the realized $\rho$ is a flat median of $1.0$ in every $(N,\beta)$ cell (Fig.~\ref{fig:rho}), with fallback frequency collapsing to approximately zero. Meanwhile, below the threshold, it degrades sharply (median $\to0$, with the fallback invoked up to $44\%$ of steps). This empirical result demonstrates that \eqref{eq:main-bound} is non-vacuous in the operating regime: when Assumption~\ref{ass:init} is satisfied, the connectivity filter imposes no significant penalty to the objective score (i.e., $\rho \approx 1$).

To distinguish whether $\rho\!=\!1$ reflects a filter that is inexpensive to satisfy or one that is invoked only rarely, we measure the per-step candidate-\emph{rejection} rate. Above threshold, the filter rejects at least one candidate on only $1.67\%$ of steps. Below threshold, by contrast, the filter rejects at least one selection on $32\%$ of steps. Thus, we conclude that the connectivity constraint is \emph{essentially inactive} above the percolation threshold and becomes binding precisely in the sparse regime where connectivity is genuinely at risk.

\subsection{Feasibility Validation (Proposition~\ref{prop:connectivity})}
\label{sec:exp-feas}
Across $3{,}619$ instances, the committed team is $(\beta{+}1)$-connected on $100\%$ of outputs, confirming Proposition~\ref{prop:connectivity}. For comparison, an ablation without the Stage S1 connectivity filter fails to maintain connectivity on 
$\approx29\%$ of instances below threshold. The S1 filter repairs this at negligible cost without measurably affecting the post-attack objective ($93.5\%$ of instances identical to the unfiltered variant), confirming that guaranteed connectivity does not trade away resilience.

\subsection{Testing the Squared Resilience Factor}
\label{sec:exp-square}
For small teams ($N\leq12$) where $\foptI$ is computable by brute force, we measure the empirical ratio $f(\Ares\setdiff\Bstar)/\foptI$ and compare against 
the squared bound of \eqref{eq:main-bound}. 


\begin{figure}[t]
  \centering
  \includegraphics[width=0.75\columnwidth]{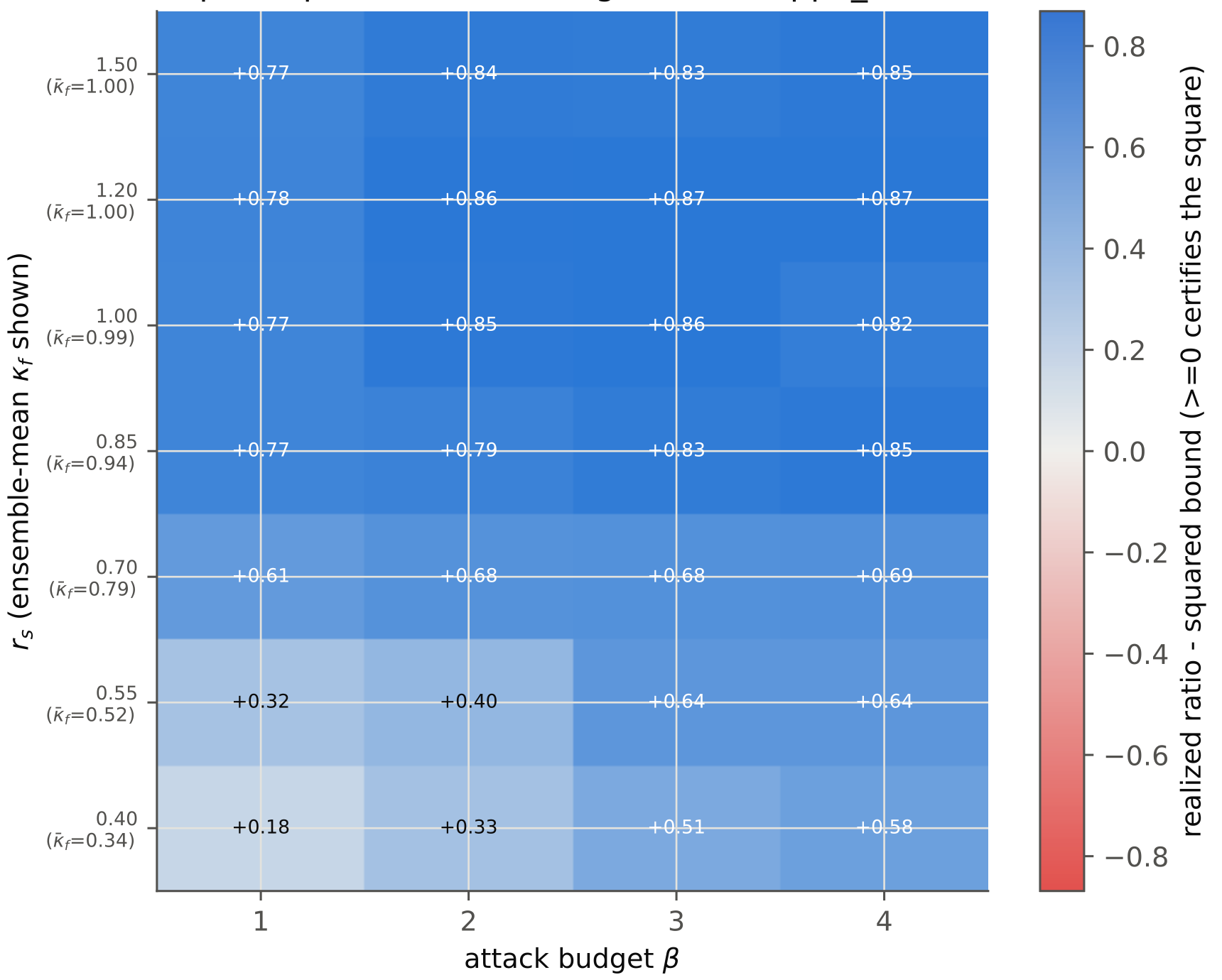}
  \caption{Margin (realized ratio $-$ squared bound) over the $(\kf,\beta)$ plane, where the non-negativity certifies the squared factor. Every cell is positive, indicating the squared bound holds across the entire swept plane on non-degenerate instances.}
  \label{fig:heatmap}
\end{figure}

The squared bound is respected on $\approx99.3\%$ of instances, with margin uniformly positive across the $(\kf,\beta)$ plane, from $+0.18$ to $+0.87$ (Fig.~\ref{fig:heatmap}). The $\approx0.7\%$ exceptions are exactly the degenerate $\foptI=0$ instances of Remark~\ref{rem:square}, independent of the connectivity regime.

\subsection{The Connectivity-Resilience Trade-off}
\label{sec:exp-tracking}
Our central claim is that RES-CON preserves connectivity under repeated adversarial attack where connectivity-blind methods fragment. We test this over a $T$-planning cycle horizon with moving targets, sweeping $\rc$ relative to the percolation threshold and measuring both the worst-case-survivor disconnection rate and cumulative target tracking.

\begin{figure}[t]
  \centering
  \includegraphics[width=0.8\columnwidth]{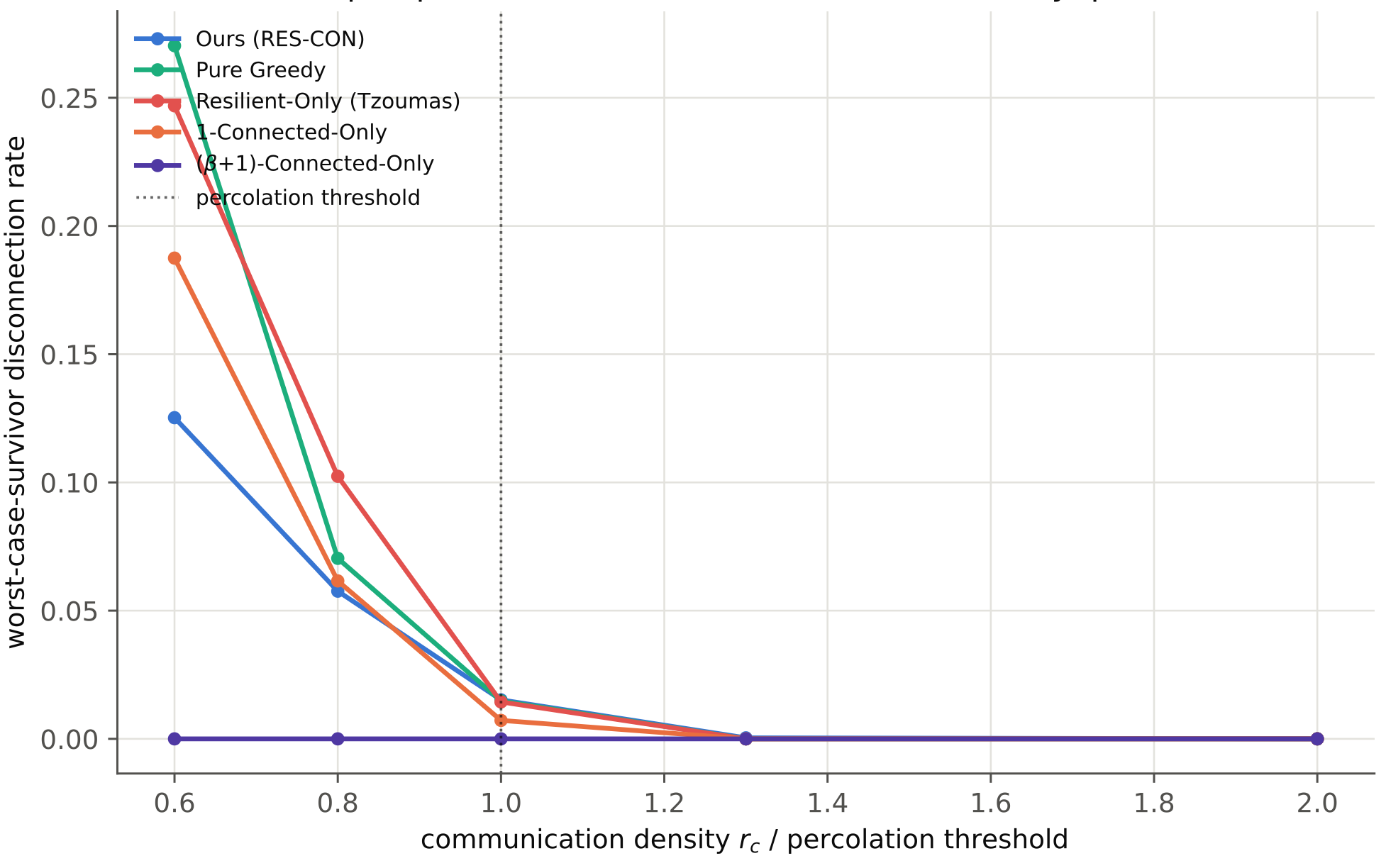}
  \caption{Worst-case-survivor disconnection rate versus communication density (normalized to the percolation threshold).}
  \label{fig:disconnection}
\end{figure}

\textbf{Connectivity:} Below the percolation threshold, RES-CON disconnects on $12.5\%$ of planning cycles at the sparsest density versus $24.7\%$ for Resilient-Only and $27\%$ for Pure Greedy (Fig.~\ref{fig:disconnection}). At and above threshold every method disconnects at near-zero rate.

\begin{figure}[t]
  \centering
  \includegraphics[width=0.8\columnwidth]{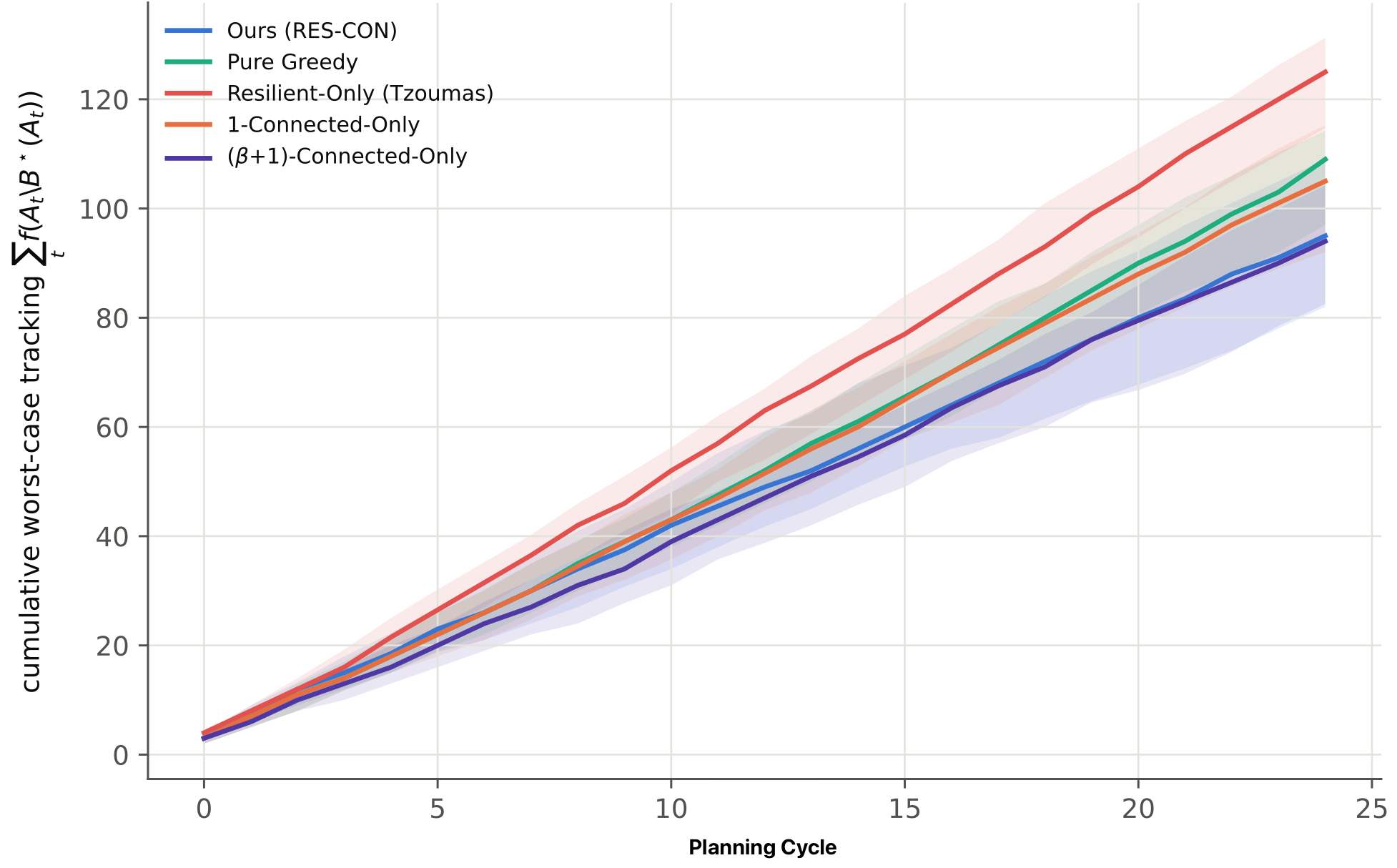}
  \caption{Cumulative worst-case tracking below threshold
    ($\rc/\rmove=5.2$). Resilient-Only accrues the most tracking value by ignoring connectivity, while RES-CON accepts a modest tracking reduction in exchange for roughly halved disconnection}
  \label{fig:tracking}
\end{figure}

\textbf{Trade-off:} Below threshold, Resilient-Only secures the highest cumulative tracking by disregarding the communication graph, but since a fragmented team cannot coordinate under our assumptions, that advantage is misleading. RES-CON instead accepts a modest tracking reduction for the roughly halved disconnection rate as shown in Fig.~\ref{fig:disconnection} and in Fig.~\ref{fig:tracking}, delivering connectivity-guaranteed tracking at a small coverage cost. At and above threshold, the tracking curves of all methods coincide, consistent with the connectivity constraint being inactive there (Section~\ref{sec:exp-rho}).


\section{Conclusion and Future Work}
\label{sec:conclusion}

We introduced resilient submodular maximization under a $(\beta{+}1)$-vertex-connectivity constraint, jointly capturing adversarial robustness and communication network integrity for multi-robot teams. Several directions remain open. First, the squared resilience factor is established under the intra-stage hypothesis of Proposition~\ref{prop:independence} and corroborated empirically. A proof that this hypothesis holds unconditionally for disk-graph geometries under bounded motion would bolster the idea. Second, deriving a geometric lower bound for $\rho$ 
would convert the penalty into a computable quantity, making \eqref{eq:main-bound} a fully unconditional guarantee on non-degenerate instances. Finally, a hardware demonstration on an aerial or ground fleet would validate the connectivity guarantee under real communication models.

\section*{Acknowledgments}

This work was sponsored in part by the U.S. Army DEVCOM Army Research Laboratory under Cooperative Agreement Number W911NF-17-2-0181.

\bibliographystyle{plain}

\begin{thebibliography}{99}

\bibitem{marques2015critical}
M.~Dunbabin and L.~Marques, ``Robots for environmental monitoring: Significant advancements and applications,'' \emph{IEEE Robotics \& Automation Magazine}, vol.~19, no.~1, pp.~24--39, 2015.

\bibitem{robin2016}
C.~Robin and S.~Lacroix, ``Multi-robot target detection and tracking: Taxonomy and survey,'' \emph{Autonomous Robots}, vol.~40, no.~4, pp.~729--760, 2016.
 
\bibitem{queralta2020}
J.~Pe\~na Queralta, J.~Taipalmaa, B.~Can Pullinen, V.~K. Sarker, T.~Nguyen Gia, H.~Tenhunen, M.~Gabbouj, J.~Raitoharju, and T.~Westerlund, ``Collaborative multi-robot search and rescue: Planning, coordination, perception, and active vision,'' \emph{IEEE Access}, vol.~8, pp.~191617--191643, 2020.
 
\bibitem{burgard2005}
W.~Burgard, M.~Moors, C.~Stachniss, and F.~E. Schneider, ``Coordinated multi-robot exploration,'' \emph{IEEE Transactions on Robotics}, vol.~21, no.~3, pp.~376--386, 2005.

\bibitem{krausegolovin2014}
A.~Krause and D.~Golovin, ``Submodular function maximization,'' in \emph{Tractability: Practical Approaches to Hard Problems}, Cambridge University Press, pp.~71--104, 2014.

\bibitem{krause2008robust}
A.~Krause, H.~B. McMahan, C.~Guestrin, and A.~Gupta, ``Robust submodular observation selection,'' \emph{Journal of Machine Learning Research}, vol.~9, no.~12, 2008.

\bibitem{shi2021}
G.~Shi, I.~E. Rabban, L.~Zhou, and P.~Tokekar, ``Communication-aware multi-robot coordination with submodular maximization,'' in \emph{Proc. IEEE International Conference on Robotics and Automation (ICRA)}, pp.~8955--8961, 2021.

\bibitem{zhou2019}
L.~Zhou, V.~Tzoumas, G.~J. Pappas, and P.~Tokekar, ``Resilient active target tracking with multiple robots,'' \emph{IEEE Robotics and Automation Letters}, vol.~4, no.~1, pp.~129--136, 2019.

\bibitem{dames2017}
P.~Dames, P.~Tokekar, and V.~Kumar, ``Detecting, localizing, and tracking an unknown number of moving targets using a team of mobile robots,'' \emph{The International Journal of Robotics Research}, vol.~36, no.~13-14, pp.~1540--1553, 2017.

\bibitem{nemhauser1978}
G.~L. Nemhauser, L.~A. Wolsey, and M.~L. Fisher, ``An analysis of approximations for maximizing submodular set functions---I,'' \emph{Mathematical Programming}, vol.~14, no.~1, pp.~265--294, 1978.

\bibitem{tzoumas2017}
V.~Tzoumas, K.~Gatsis, A.~Jadbabaie, and G.~J. Pappas, ``Resilient monotone submodular function maximization,'' in \emph{Proc. IEEE Conference on Decision and Control (CDC)}, pp.~1362--1367, 2017.

\bibitem{orlin2018}
J.~B. Orlin, A.~S. Schulz, and R.~Udwani, ``Robust monotone submodular function maximization,'' \emph{Mathematical Programming}, vol.~172, no.~1, pp.~505--537, 2018.

\bibitem{conforti1984}
M.~Conforti and G.~Cornu\'ejols, ``Submodular set functions, matroids and the greedy algorithm,'' \emph{Discrete Applied Mathematics}, vol.~7, no.~3, pp.~251--274, 1984.

\bibitem{iyer2013}
R.~K. Iyer, S.~Jegelka, and J.~A. Bilmes, ``Curvature and optimal algorithms for learning and minimizing submodular functions,'' in \emph{Advances in Neural Information Processing Systems}, vol.~26, 2013.

\bibitem{calinescu2011}
G.~Calinescu, C.~Chekuri, M.~P\'al, and J.~Vondr\'ak, ``Maximizing a monotone submodular function subject to a matroid constraint,'' \emph{SIAM Journal on Computing}, vol.~40, no.~6, pp.~1740--1766, 2011.

\bibitem{chekuri2011}
C.~Chekuri, J.~Vondr\'ak, and R.~Zenklusen, ``Submodular function maximization via the multilinear relaxation and contention resolution schemes,'' in \emph{Proc. 43rd ACM Symposium on Theory of Computing (STOC)}, pp.~783--792, 2011.

\bibitem{vandin2011}
F.~Vandin, E.~Upfal, and B.~J. Raphael, ``Algorithms for detecting significantly mutated pathways in cancer,'' \emph{Journal of Computational Biology}, vol.~18, no.~3, pp.~507--522, 2011.

\bibitem{hochbaum2020}
D.~S. Hochbaum and X.~Rao, ``Approximation algorithms for connected maximum coverage problem for the discovery of mutated driver pathways in cancer,'' \emph{Information Processing Letters}, vol.~158, art.~105940, 2020.

\bibitem{huang2019}
C.-C. Huang, M.~Mari, C.~Mathieu, J.~S.~B. Mitchell, and N.~H. Mustafa, ``Maximizing covered area in the Euclidean plane with connectivity constraint,'' in \emph{APPROX/RANDOM}, pp.~32:1--32:21, 2019.

\bibitem{kuo2014}
T.-W. Kuo, K.~C.-J. Lin, and M.-J. Tsai, ``Maximizing submodular set function with connectivity constraint: Theory and application to networks,'' \emph{IEEE/ACM Transactions on Networking}, vol.~23, no.~2, pp.~533--546, 2014.

\bibitem{shi2025fast}
G.~Shi, M.~Ishat-E-Rabban, G.~Bonner, and P.~Tokekar, ``Fast $k$-connectivity restoration in multi-robot systems for robust communication maintenance: Algorithmic and learning-based solutions,'' \emph{Autonomous Robots}, vol.~49, Art.~no.~34, 2025.

\bibitem{menger1927}
K.~Menger, ``Zur allgemeinen Kurventheorie,'' \emph{Fundamenta Mathematicae}, vol.~10, no.~1, pp.~96--115, 1927.

\end{thebibliography}

\end{document}